\newcommand{\varprocess}{V_t}
\newcommand{\uncopt}{\eta^*_t}
\theoremstyle{plain}
\newtheorem{theorem}{Theorem}[section]
\newtheorem{proposition}[theorem]{Proposition}
\newtheorem{lemma}[theorem]{Lemma}
\newtheorem{corollary}[theorem]{Corollary}
\theoremstyle{definition}
\newtheorem{definition}{Definition}
\newtheorem{remark}{Remark}
\definecolor{mygray}{gray}{0.8}
\newcommand{\numberthis}{\addtocounter{equation}{1}\tag{\theequation}}
\newcommand{\lrset}[1]{\left\{ #1 \right\}}
\newcommand{\lrp}[1]{\left( #1 \right)}
\newcommand{\lrs}[1]{\left[ #1 \right]}
\newcommand{\abs}[1]{\left|{#1}\right|}
\newcommand{\R}{\mathbb{R}}
\newcommand{\N}{\mathbb{N}}
\newcommand{\calE}{\mathcal E}
\title{
Eventually LIL Regret: Almost Sure $\ln\ln T$ Regret for a sub-Gaussian Mixture on Unbounded Data}
\date{}
\author{Shubhada Agrawal$^1$, Aaditya Ramdas$^2$\\
$^1$Indian Institute of Science, $^2$Carnegie Mellon University\\
\texttt{shubhada@iisc.ac.in, aramdas@cmu.edu}\\
\smallskip \\
\today
}
\begin{document}
\maketitle

\begin{abstract}
We prove that a classic sub-Gaussian mixture proposed by Robbins in a stochastic setting actually satisfies a path-wise (deterministic) regret bound. For every path in a natural ``Ville event'' $\mathcal E_\alpha$, this regret till time $T$ is bounded by $\ln^2(1/\alpha)/V_T + \ln (1/\alpha) + \ln \ln V_T$ up to universal constants, where $V_T$ is a nonnegative, nondecreasing, cumulative variance process. (The bound reduces to $\ln(1/\alpha) + \ln \ln V_T$ if $V_T \geq \ln(1/\alpha)$.) If the data were stochastic, then one can show that $\mathcal E_\alpha$ has probability at least $1-\alpha$ under a wide class of distributions (eg: sub-Gaussian, symmetric, variance-bounded, etc.). In fact, we show that on the Ville event $\mathcal E_0$ of probability one, the regret on every path in $\mathcal E_0$ is eventually bounded by $\ln \ln V_T$ (up to constants). We explain how this work helps bridge the world of adversarial online learning (which usually deals with regret bounds for bounded data), with game-theoretic statistics (which can handle unbounded data, albeit using stochastic assumptions). In short, conditional regret bounds serve as a bridge between stochastic and adversarial betting.
\end{abstract}

\section{Introduction}
Since at least the work of Robbins~\citep{robbins1968iterated,darling1967confidence, darling1967iterated}, \emph{mixture} supermartingales have been used in conjunction with Ville’s supermartingale inequality \citep{ville1939etude} in order to deliver time-uniform concentration inequalities and confidence sequences. Since then, extensions of Robbins' method-of-mixtures have delivered tight bounds in both parametric and nonparametric settings, enabling several modern applications in sequential testing, estimation and bandits \citep{kaufmann_mm_21,howard2021time,waudby2024estimating}.  All of these works yield coverage guarantees that are inherently stochastic, and do not appear to deliver any adversarial guarantees.


In contrast, the current paper proves a \emph{deterministic, pathwise} wealth/regret inequality (valid for every data sequence) for a particular sub-Gaussian mixture method proposed by Robbins. 
What makes this regret bound interesting is that it allows the underlying sequences to have unbounded domain. Due to the unbounded domain, the worst case regret bound is obviously also unbounded.
We then show that there is a special ``Ville event'' $\mathcal E_\alpha$ on which the regret is $\ln\ln V_T +  \ln^2(1/\alpha)/V_T + \ln(1/\alpha)$ up to constants, where $V_T$ is a nonnegative, nondecreasing, cumulative variance process.
 What makes $\mathcal E_\alpha$ special is that if the data is stochastic (for example, conditionally zero mean and 1-sub-Gaussian), then $\mathcal E_\alpha$ holds with probability $1-\alpha$.  

We call this type of bound a \emph{conditional} regret bound, because it implies low regret on a \emph{large} set of paths, rather than all paths, where \emph{large} is quantified by being a high probability set under a suitable set of probability measures. In fact, we show that on $\mathcal E_0$ (a measure one set), every path \emph{eventually} has $\ln\ln V_T$ regret. 
To summarize at a high level, we prove that, deterministically:
\begin{quote}
\emph{Most} paths have $\ln\ln V_T$ regret, and \emph{almost all} paths eventually have $\ln\ln V_T$ regret.
\end{quote}

The above guarantee extends existing connections between wealth and regret in the online convex optimization or coin-betting literature, where path-wise regret bounds are standard (e.g., \citet{orabona2016coin,orabona2019modern, orabona2023tight,cutkosky2018black}), but the data, losses, or gradients in the prior work are generally assumed to be bounded. For unbounded data, it is impossible to prove low regret on all paths since a single extreme observation can blow up regret; our contribution is to prove an $O(\ln \ln V_T)$ regret on a large set of paths. We also point out that getting a $\ln V_T$ conditional regret bound is quite easy; we show how to do that (for a different mixture strategy) as a preliminary step for intuition. 

\paragraph{Contributions. } We now summarize key contributions of this work. 
\begin{enumerate}
    \item\label{unc_regbound} For observations $X_1, X_2, \dots$, and functions $\varprocess$ of $X_1, \dots, X_t$, we show that the classic sub-Gaussian process ($\exp\{\eta \sum_{i=1}^t X_i - \eta^2 \varprocess/2\}$ for $\eta \in \R$), when mixed over the parameters $\eta$ weighted using Robbins' prior, satisfies a deterministic regret bound when compared with $\max_\eta \exp\{\eta \sum_i X_i - \eta^2\varprocess/2\}$, where the $\max_\eta$ is unrestricted. This bound holds for all data sequences and does not require any assumptions (Eq.~\eqref{eq:pathwiseregretrobbins}, Theorem~\ref{th:lilregretV}). 
    
    \item \label{unc_lil} As a consequence, we show that on all paths in an event $\mathcal E_\alpha$ where the mixture process remains uniformly bounded by $\log(1/\alpha)$ for some $\alpha \in (0,1)$, the regret till time $T$ is $O(\ln^2(1/\alpha)/V_T + \log(1/\alpha) + \ln \ln V_T)$, for every $T$ (Eq.~\eqref{eq:lilregretonealpha}, Theorem~\ref{th:lilregretV}). We call such bounds that hold on a subset (for example, $\calE_\alpha$), rather than on all sequences, \emph{conditional regret bounds}. 
    
    \item\label{stoc_lil} If in Point~\ref{unc_lil} above, the data are assumed to be drawn from a distribution (not necessarily iid) such that the mixture processes is a non-negative supermartingale (Definition~\ref{def:subgaussian}), the ``Ville event'' $\calE_\alpha$ has measure at least $1-\alpha$. Thus, the pathwise regret bound from Point~\ref{unc_lil} holds for a \emph{large} fraction of paths (Theorem~\ref{th:lilregretV}). Importantly, the stochasticity is only used to quantify the size of $\calE_\alpha$ --- the regret bounds still hold deterministically on $\calE_\alpha$. Our deterministic bounds thus imply the high probability regret bound: $P(R_T = O(\ln\ln V_T + \ln^2(1/\alpha)/V_T + \log(1/\alpha)) \geq 1-\alpha$ simultaneously for a large class of measures $P$ that ensure $W_t$ is a sub-Gaussian process. 
    
    \item Without any assumptions on the data, we show that the conditional regret of $O(\ln\ln V_T)$ holds eventually (as opposed to for all $T$) on a larger set $\calE_0$, where the mixture process remains finite, but not necessarily uniformly bounded (Eq.~\eqref{eq:lilregretas}, Theorem~\ref{th:lilasregretV}). Interestingly, under the stochasticity assumptions form Point~\ref{stoc_lil}, when the mixture process is a non-negative supermartingale, $\calE_0$ has measure $1$. 
    While the previous bounds are akin to high probability regret bounds, this type of bound appears to be new in the literature: $P(R_T = O(\ln\ln V_T) \text{ eventually})=1$.
    Moreover, in this case, the mixture process acts as a witness for the (self-normalized) strong law of large numbers (SLLN) as well as an upper law of iterated logarithm (LIL), meaning that either self-normalized SLLN (and upper LIL) holds, or the mixture process gets unbounded (Theorem~\ref{th:lilasregretV}).
    
\end{enumerate}

\paragraph{Paper outline} 
In Section~\ref{sec:setup}, we formalize the setup and introduce the notation. Background material is reviewed in Section~\ref{sec:background}, and related work is discussed in Section~\ref{sec:literature}. As a warm-up, Section~\ref{sec:warmuplogTregret} analyzes logarithmic pathwise conditional regret bounds. Our main results are presented in Section~\ref{sec:lilregret}, where Theorem~\ref{th:lilregretV} establishes an unconditional regret bound valid for all data sequences. While the worst-case regret can grow linearly in $T$ (or $V_T$), we show that on a certain good event, the regret is only iterated-logarithmic eventually. Moreover, this good event is a measure $1$ set under appropriate stochasticity assumptions. 
Finally, Section~\ref{sec:conc} concludes with broader perspectives.

\section{Preliminaries: Background and Literature}\label{sec:preliminaries}

In this section, we formally introduce the setup, provide the necessary background, and discuss the relevant literature. 
\subsection{Setup}\label{sec:setup}
Let $X_1, X_2,\dots$ be a stream of observations taking values in $\R$, possibly unbounded. For each $t\in\N$, define the partial sum corresponding to the initial $t$ samples as $S_t:=\sum_{i=1}^t X_i$, and let $\varprocess$ be a function of $X_1, \dots, X_t$ that is nonnegative and increasing (usually strictly) in $t$. Further, $S_0=V_0=0$ and let $S_t = 0$ whenever $V_t = 0$, and we interpret $0/0=0$. 
For any fixed  $\eta\in\mathbb{R}$, let
\[f_t(\eta) := \eta S_t- \eta^2\varprocess/{2}. \numberthis\label{eq:ft} \]
Observe that $f_t(\cdot)$ is a concave function. Let 
\[\uncopt := \begin{cases}{S_t}/{\varprocess} &\text{ if } V_t > 0\\
0, &\text{otherwise},
\end{cases} \qquad \text{and} \qquad L_t^* := f_t(\uncopt) = \begin{cases}{S_t^2}/({2 \varprocess}), &\text{ if } V_t > 0\\
0, & \text{otherwise},
\end{cases}\]
denote the unconstrained maximizer for $f_t(\cdot)$, and the corresponding maximum value, respectively. 

Next, for any prior $\pi$ on $\eta$, define the mixture (wealth) process
\[
W_t := \int_{\mathbb R} e^{f_t(\eta)} \pi(\eta) d\eta,
\numberthis\label{eq:mixtureprocess}\]
with $W_0=1$; and $W_t := 1$ for $t$ such that $V_t = 0$. 
The regret of $W_t$ with respect to the maximum attainable wealth is defined as $$R_t := L_t^*-\ln W_t.$$
Note that $R_t=0$ whenever $V_t=0$, so it is without loss of generality that the reader may consider $V_t>0$ by default going forward. Finally, for $\alpha \in (0,1)$, consider the set 
\[ \calE_\alpha = \lrset{\sup\limits_{t \ge 1}~ \ln W_t \le \ln \frac{1}{\alpha}}, \numberthis\label{eq:ealpha} \]
on which the log-wealth $W_t$ remains bounded. We call $\calE_\alpha$ the ``Ville event''.

\begin{remark}
We do not make any assumptions about the data stream. In particular, it is an arbitrary sequence of real numbers. Hence, all the quantities defined above are deterministic. However, whenever the sequence $X_i$ is assumed to be drawn from some probability distribution, all capitalized variables defined above should be interpreted as random variables.
\end{remark}

The main focus of this work is to establish regret bounds for the mixture wealth process $W_t$, constructed using a mixing distribution $\pi$ originally introduced in \cite{robbins1968iterated}. We show that this classical stochastic construction can be reinterpreted within an adversarial framework: specifically, $W_t$ admits deterministic, path-wise regret guarantees, which have an interesting nontrivial interpretation. The worst-case regret is unbounded (linear in $t$ or $V_t$); but we can prove that one attains low regret on every path in $\mathcal E_\alpha$. What makes $\mathcal E_\alpha$ special is that under natural stochastic assumptions (implied by the underlying mixture processes), the event holds with high probability. So we obtain a deterministic low regret bound on every path in a high probability event. In fact, we even show that we achieve a small regret \emph{eventually} on almost every path (i.e.\ on every path in measure $1$ set $\mathcal E_0$). In other words, the set of paths on which regret is unbounded is a measure zero set (under many natural families of distributions specified later). 

Our work thus connects worst-case regret analysis with stochastic betting approaches developed in game-theoretic statistics. To place our results in context, we first review the necessary background that will be used throughout. We also refer the reader to Appendix~\ref{app:bettinggame} for a detailed discussion of the betting game considered in this work, including the performance criteria for the betting strategies and the associated notion of regret. 


\subsection{Background}\label{sec:background}
We begin with notions that characterize the behavior of stochastic processes, such as sub-Gaussian tails, the strong law of large numbers (SLLN), and the law of the iterated logarithm (LIL). We then review the framework of betting and wealth processes, which will serve as useful tools in relating the adversarial and stochastic betting via our conditional regret bounds. After some formal setup, we present a key  definition of \emph{sub-Gaussian processes} from  \cite{victor2004self, howard2020time}.

When dealing with stochastic data, we will work on a filtered measurable space $(\Omega,\mathcal F)$, where $\mathcal F = (\mathcal F_t)_{t \geq 0}$ is a filtration (an increasing sequence of sigma algebras). We will also consider a family $\Pi$ of probability measures defined on $(\Omega,\mathcal F)$. 

For this paper, the underlying $\Omega$ is typically $\R^\infty$, where we observe a single real-valued observation $X_t$ at each time $t$ so that $\mathcal F_t = \sigma(X_1,\dots,X_t)$, with $\mathcal F_0$ being trivial. 
A stochastic process $M$ is a sequence of random variables adapted to $\mathcal F$, meaning that $M_t$ is $\mathcal F_t$-measurable. For $P \in \Pi$, we say that $M$ is $P$-integrable if $\mathbb E_P[|M_t|]<\infty$ for all $t$. A $P$-supermartingale is a $P$-integrable stochastic process $M$ such that $\mathbb E_P[M_t | \mathcal F_{t-1}] \leq M_{t-1}$, $P$-almost surely. Given  $\mathcal P \subset \Pi$, we say that $M$ is a $\mathcal P$-supermartingale if $M$ is a $P$-supermartingale for every $P \in \mathcal P$. If $P$ or $\mathcal P$ are understood from context or not important, we just call $M$ a supermartingale.

\begin{definition}[Sub-Gaussian process]\label{def:subgaussian}
A stochastic process $(S_t)_{t \geq 0}$ is said to be a sub-Gaussian process with a nonnegative and nondecreasing variance proxy  $(\varprocess)_{t \geq 0}$ if $(\exp(\eta S_t - \eta^2 \varprocess/2))_{t \geq 0}$ is a (nonnegative) super-martingale for every $\eta\in\R$. Typically, we take $S_0=V_0=0$ by default.
\end{definition}


$S_t$ is often simply a sum process $\sum_{i=1}^t X_t$.
We recall in Appendix~\ref{app:subGaussian} that the above condition is satisfied (for different $V_t$) when $\mathcal P$ contains distributions that are (a) centered sub-Gaussian, or (b) symmetric around 0, or (c) have mean zero and finite variance. The condition thus encompasses both light-tailed (case (a)) and heavy-tailed (cases (b) and (c)) settings. Typically, $V_t$ will be strictly positive and strictly increasing. However, we do not assume it because in case (b) above, we have $V_t = \sum_{i \leq t} X_i^2$, and if the distribution of $X_1$ (for example) is symmetric with a point mass at 0 (eg: it equals 0 with probability half, and a standard Gaussian with probability half), then it is possible for $X_1=0$ to occur with positive probability, and then  $S_1=V_1=0$. Outside of these sorts of edge cases, $V_t$ will indeed be strictly increasing and positive for $t \geq 1$. Also, typically $V_t$ will increase to infinity as $t \to \infty$, but we will only assume that when needed to derive limiting statements.

\paragraph{Strong law of large numbers (SLLN) and iterated logarithm (LIL). } SLLN and LIL describe the asymptotics of stochastic processes. For example, when $X_i$ are iid from some distribution with mean $0$, the SLLN guarantees that $S_t/t \to 0$ almost surely as $t\to\infty$, while if they also have finite variance $\sigma^2$, then the LIL refines this as below: 
\[ \limsup\limits_{t\to\infty} \frac{|S_t|}{\sigma\sqrt{2t\ln\ln t}} = 1, \quad \text{ almost surely.} \]
These results also extend to self-normalized processes, which are especially useful when variances are unknown or data are heavy-tailed \citep{de2009self}; one can derive a self-normalized SLLN and LIL involving $S_t/V_t$ and $S_t/\sqrt{V_t \log\log V_t}$.
Our versions of self-normalized SLLN and LIL for sub-Gaussian processes are in Section~\ref{sec:lilregret}.



\paragraph{Martingales, betting, and wealth processes.} 
There is an equivalence between non-negative (super)martingales and bets in a (conservative) fair game. A wealth process $(W_t)_{t \geq 0}$ is a nonnegative adapted process that satisfies $W_0=1$. Such processes can be interpreted as the capital of a gambler who starts with a unit initial wealth $(W_0 = 1)$, and at each time $t$ bets a fraction of the current wealth $W_{t-1}$ on the next outcome. The gambler's wealth at time $t$ is $W_t$. If the bets are made in a fair (conservative) game, then the wealth process $(W_t)_{t\ge1}$ is a non-negative (super)martingale with respect to the natural data filtration. Ville’s inequality then yields the following time-uniform guarantee: $P[\sup_t W_t \ge \frac{1}{\alpha}] \le \alpha$. 

Conversely, given a non-negative martingale $(M_t)_{t\ge 1}$ with $M_0 = 1$, it can be viewed as a wealth process generated by a sequence of bets. At time $t$, the per-round bet is an e-variable \citep[\S 1]{ramdas2024hypothesis} that is given by $\frac{M_t}{M_{t-1}}1(M_{t-1}>0)$, with $0/0=0$. The martingale property ensures that the game is fair.  This perspective underlies developments in game-theoretic probability~\citep{shafer2005probability,shafer2019game} and game-theoretic statistics \citep{shafer2021testing,ramdas2023game,ramdas2024hypothesis}, where nonnegative martingales are also referred to as wealth processes, and stochastic coin-betting algorithms \citep{orabona2016coin}. We refer the reader to Appendix~\ref{app:bettinggame} for a detailed discussion on the betting game that we consider in this work.  

\subsection{Literature Review}\label{sec:literature}
In the online learning and information theory literature, \cite{cover1991universal, cover2002universal} prove an $O(\log t)$ deterministic regret bound against all sequences of observations from a bounded domain, say $[0,1]$, for their mixture strategy. Several later works in online learning literature propose different mixture wealth processes that improve the constant terms in the path-wise regret bound of $O(\log t)$, that holds for any sequence of bounded data. 

\cite{cutkosky2019matrix} use wealth/betting arguments to derive a path-wise regret bound in an adversarial online learning setting. However, they require that the gradients are bounded in a certain dual norm. \citet{koolen2015second} prove a deterministic, pathwise regret bound that is $O(\sqrt{V_t\ln\ln V_t})$ for a certain variance process $V_t \le t$. However, this result crucially relies on bounded losses. 

\citet{orabona2023tight} leverage the wealth-regret connection, but their primary aim is to construct time-uniform confidence sets for the mean of bounded data using the wealth process of certain betting strategies. In their work, they implicitly prove iterated logarithmic conditional regret bounds for bounded data for a wealth process different from $W_t$. By contrast, we extend the analysis to unbounded data and also prove regret bounds for $W_t$ that hold for arbitrary sequences of observations. 

\citet[Lemma F.1]{agrawal2021optimal} prove a deterministic regret bound of $O(\ln t)$ for unbounded data, that holds for an arbitrary sequence of observations. However, they define regret with respect to a constrained optimizer. In our work, we instead consider regret with respect to the global optimizer and prove a deterministic regret bound for unbounded data, which we show is eventually $O(\ln\ln t)$ on a measure $1$ set for stochastic data. 

In the coding literature, \citet{rissanen2002fisher} showed that if the parameter space of an exponential family is restricted to a compact subset of the interior of the parameter space with non-empty interior, then the minimax regret is $O(\log t)$. We refer the reader to \cite{grunwald2007minimum} for more details. We do not make any such assumptions, and instead, prove a conditional regret bound of $O(\ln\ln t)$, that eventually holds on a measure $1$ set (instead of against all sequences).

There are a few works in the literature that derive inequalities for martingales (concentration/bound on moments, etc.) from certain deterministic counterparts. For example, \cite{acciaio2013trajectorial} derive Doob's $L^p$ maximal inequalities (bounds on $p^{th}$ moments of a martingale) as a consequence of elementary deterministic inequalities that hold for all sequences. \cite{beiglbock2014martingale} show that a general martingale inequality can be reduced to certain deterministic inequalities in a small number of variables. \cite{beiglbock2015pathwise} provide a proof of the Burkholder-Davis-Gundy inequalities as a consequence of elementary deterministic inequalities. \cite{gushchin2014pathwise} derives Doob's maximal inequality on the probability of a martingale exceeding a level, from deterministic counterparts. These works prove probabilistic martingale inequalities via deterministic statements. Despite also connecting pathwise statements and martingale inequalities, our contributions are orthogonal: we prove deterministic regret bounds for a particular mixture method on unbounded data. 

\citet{clerico2025confidence} develop a method to construct time-uniform confidence sets for parameters of a generalized linear model, via reduction to the online learning framework. They prove certain path-wise data-dependent regret bounds that hold with probability $1$, and use them to construct confidence sequences. We establish deterministic regret bounds for unbounded data, and show that under stochastic assumptions, these bounds improve to $O(\ln \ln t)$ on a set of measure $1$. We are not aware of any work that proves such rates for unbounded observations, on a set of measure $1$.

Finally, \cite{rakhlin2017equivalence} highlight the tight connections between martingale concentration inequalities and regret analysis. They show an equivalence between tail inequalities on the supremum of a collection of martingales and certain path-wise regret bounds. However, our contributions appear orthogonal to theirs; there is no mention in their work of Ville's inequality, mixture wealth processes, etc. Nevertheless, our work can be considered within the same theme of connecting martingales and regret. 

Shafer and Vovk \citep[Chapter~5]{shafer2005probability} prove a game-theoretic version of the LIL that holds path-wise and applies to arbitrary forecasting protocols. However, their framework implicitly restricts the sequence of outcomes to be bounded, with possibly time-dependent bounds (see, \citet[Theorems 5.1, 5.2]{shafer2005probability}). Our results are complementary: we handle unbounded data via self-normalization, obtain explicit regret bounds valid for all $t$, and construct the supermartingale that certifies our version of the SLLN and LIL.

All the above-cited works span different sub-fields. A complete picture that relates these is not entirely clear to us, and forming such a complete picture is a direction for future work.

\section{Warm-up: Logarithmic Conditional Regret}\label{sec:warmuplogTregret}
To get some intuition, we begin with a simple setting when the prior $\pi$ on $\eta$ is a Gaussian distribution with mean $0$  and variance $\sigma^2_0$, for some $\sigma_0 > 0$. Then, the mixture (wealth) process from~\eqref{eq:mixtureprocess} becomes
\[
W_t = \frac{1}{\sqrt{1+\sigma_0^2 V_t}}e^{\frac{\sigma_0^2 S_t^2}{2(1+\sigma_0^2 V_t)}},
\numberthis\label{eq:mixV}\]
and as earlier, $W_t = 1$ for $t$ such that $V_t = 0$. 
Theorem~\ref{th:ville_gaussianmixture} below shows that the mixture $W_t$ incurs a regret of $O(\ln \varprocess + \ln(1/\alpha))$ on the set $\calE_\alpha$, defined in~\eqref{eq:ealpha}. 

\begin{theorem}\label{th:ville_gaussianmixture}
For all $t\ge 0$,
\[
R_t
 =  \frac{1}{2}\ln\big(1+\sigma_0^2 V_t\big)
 +  \frac{S_t^2}{2V_t (1+\sigma_0^2 V_t)}. \]
 (Recall 0/0=0.)
 Further, for any $v_0>0$, the following holds on $\calE_\alpha$:  
\[
\forall t\ge 1, \quad  \text{either} \quad V_t\le v_0 \quad \text{or} \quad R_t  \le  \Big(\tfrac12 + \tfrac{1}{2\sigma_0^2 v_0}\Big) \ln\big(1+\sigma_0^2 V_t\big) +  \frac{\ln(1/\alpha)}{\sigma_0^2 v_0}.\numberthis\label{eq:boundrtgaussianmix}
\]
Moreover, if the data are drawn from any distribution $P$ such that $(S_t)_{t\ge 1}$ is a sub-Gaussian process with variance proxy $(V_t)_{t\ge 1}$, then 
$P[\calE_\alpha] \ge 1-\alpha$.
\end{theorem}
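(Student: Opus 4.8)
The plan is to establish the three assertions in turn: the closed-form identity for $R_t$, the conditional bound \eqref{eq:boundrtgaussianmix} on $\calE_\alpha$, and finally the probability statement $P[\calE_\alpha]\ge 1-\alpha$. For the identity, I would start from the closed form \eqref{eq:mixV} (which itself follows by completing the square in the Gaussian integral $\int e^{\eta S_t-\eta^2 V_t/2}\,\pi(\eta)\,d\eta$ with $\pi=\mathcal N(0,\sigma_0^2)$, collecting the $\eta^2$ coefficient into $V_t+1/\sigma_0^2$). Taking logs gives $\ln Z_t = -\tfrac12\ln(1+\sigma_0^2 V_t) + \tfrac{\sigma_0^2 S_t^2}{2(1+\sigma_0^2 V_t)}$, and since $L_t^* = S_t^2/(2V_t)$, the regret is $R_t = L_t^*-\ln Z_t = \tfrac12\ln(1+\sigma_0^2 V_t) + \tfrac{S_t^2}{2}\big(\tfrac1{V_t}-\tfrac{\sigma_0^2}{1+\sigma_0^2 V_t}\big)$. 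The only real computation is simplifying the bracket over a common denominator, which collapses to $\tfrac{1}{V_t(1+\sigma_0^2 V_t)}$, yielding exactly the claimed identity; the case $V_t=0$ is handled by the stated conventions.

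For the conditional bound, I would work on $\calE_\alpha$, where by definition $\ln Z_t\le\ln(1/\alpha)$ for all $t$. Rearranging the expression for $\ln Z_t$ shows that on $\calE_\alpha$ the exponent term is controlled: $\tfrac{\sigma_0^2 S_t^2}{2(1+\sigma_0^2 V_t)}\le \ln(1/\alpha)+\tfrac12\ln(1+\sigma_0^2 V_t)$. The second, data-dependent term of $R_t$ is exactly this quantity divided by $\sigma_0^2 V_t$, so on the branch $V_t>v_0$ I would use $1/V_t<1/v_0$ to obtain $\tfrac{S_t^2}{2V_t(1+\sigma_0^2 V_t)}\le \tfrac{\ln(1/\alpha)}{\sigma_0^2 v_0}+\tfrac{\ln(1+\sigma_0^2 V_t)}{2\sigma_0^2 v_0}$. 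Adding back the $\tfrac12\ln(1+\sigma_0^2 V_t)$ term coming from the identity gives precisely \eqref{eq:boundrtgaussianmix}; when instead $V_t\le v_0$ there is nothing to prove, which produces the stated dichotomy.

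For the probability claim, the key object is $Z_t$ viewed as a process. I would show it is a nonnegative supermartingale with $Z_0=1$: nonnegativity and $Z_0=1$ are immediate (the integrand at $t=0$ is $1$ and $\pi$ integrates to $1$), while for the supermartingale step I would push the conditional expectation inside the integral by Tonelli (the integrand is nonnegative) and invoke Definition~\ref{def:subgaussian}, which says $e^{f_t(\eta)}$ is a supermartingale for each fixed $\eta$, so $\mathbb E_P[e^{f_t(\eta)}\mid\mathcal F_{t-1}]\le e^{f_{t-1}(\eta)}$; integrating against $\pi$ gives $\mathbb E_P[Z_t\mid\mathcal F_{t-1}]\le Z_{t-1}$, and taking unconditional expectations inductively yields $\mathbb E_P[Z_t]\le 1$, so $Z_t$ is integrable and the supermartingale property is legitimate. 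Ville's inequality then gives $P[\sup_t Z_t\ge 1/\alpha]\le\alpha$, and since $\calE_\alpha^c=\{\sup_t\ln Z_t>\ln(1/\alpha)\}\subseteq\{\sup_t Z_t\ge 1/\alpha\}$, we conclude $P[\calE_\alpha]\ge 1-\alpha$.

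The algebra in the first two parts is routine, and the logic of the third is a textbook Ville argument; the step demanding the most care is the Fubini--Tonelli interchange and accompanying integrability bookkeeping that upgrades the per-$\eta$ supermartingale property of $e^{f_t(\eta)}$ to the supermartingale property of the mixed process $Z_t$, together with checking that the edge-case conventions ($V_t=0$, $0/0=0$, $Z_t:=1$ when $V_t=0$) are consistent across all three steps.
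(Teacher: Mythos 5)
Your proposal is correct and takes essentially the same route as the paper's own appendix proof: the same algebraic identity for $R_t$, the same rearrangement bounding the data-dependent term by $\tfrac{1}{\sigma_0^2 V_t}\bigl(\ln(1/\alpha)+\tfrac12\ln(1+\sigma_0^2 V_t)\bigr)$ with $V_t$ replaced by $v_0$ in the denominator, and the same mixture-supermartingale-plus-Ville argument for $P[\calE_\alpha]\ge 1-\alpha$. Your explicit Tonelli interchange and integrability bookkeeping merely spell out a step the paper leaves implicit.
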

The proof of the above theorem proceeds by first establishing an expression for the unconditional regret, which corresponds to the first expression in the theorem statement. Next, on the set $\calE_\alpha$,  $\ln W_t$ is bounded. Using this bound in the expression for $R_t$ after appropriate rearrangements, we get a conditional bound on $R_t$ on the set $\calE_\alpha$, in terms of $\varprocess$ and $S_t$. The expression in~\eqref{eq:boundrtgaussianmix} then follows by bounding certain occurrences of $\varprocess$ by $v_0$. The proof is straightforward, and we present it in Appendix~\ref{app:proof_warmuplogTregret} for completeness. 

\begin{corollary}
    For the special case when $X_i$ are drawn iid from a standard Gaussian distribution, $S_t$ is a sub-Gaussian process with variance proxy $\varprocess = t$ (see Appendix~\ref{app:subGaussian}). Assuming $\sigma_0 = 1$ for simplicity, in this case, $R_t = \frac{1}{2}\ln(1+t) + \frac{S^2_t}{2t(1+t)}$ path-wise. Moreover, on $\calE_\alpha$, which occurs with probability at least $1-\alpha$, we have $R_t \le \ln(1+t) + {\ln(1/\alpha)}$, for all $t\ge 1$.
\end{corollary}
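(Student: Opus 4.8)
The plan is to obtain the corollary as a direct specialization of Theorem~\ref{th:ville_gaussianmixture}, so no new machinery is needed; the only care required is tracking the constants so that the clean bound $R_t \le \ln(1+t) + \ln(1/\alpha)$ holds for \emph{every} $t\ge 1$ rather than merely asymptotically. First I would verify the sub-Gaussian property. For $X_i \overset{\text{iid}}{\sim} N(0,1)$ the moment generating function gives $\E{e^{\eta X_t}} = e^{\eta^2/2}$, whence $\E{e^{\eta S_t - \eta^2 t/2}\mid \mathcal F_{t-1}} = e^{\eta S_{t-1} - \eta^2(t-1)/2}$, so $(e^{\eta S_t - \eta^2 t/2})_{t\ge 0}$ is a martingale (a fortiori a supermartingale) for every $\eta\in\R$. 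By Definition~\ref{def:subgaussian} this exhibits $(S_t)$ as a sub-Gaussian process with variance proxy $V_t = t$, which is exactly case (a) recalled in Appendix~\ref{app:subGaussian}.

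The pathwise identity is then immediate: substituting $V_t = t$ and $\sigma_0 = 1$ into the exact regret expression of Theorem~\ref{th:ville_gaussianmixture} yields $R_t = \tfrac12\ln(1+t) + \tfrac{S_t^2}{2t(1+t)}$, valid for all sequences with no probabilistic input. For the conditional bound I would \emph{not} plug a fixed $v_0$ into~\eqref{eq:boundrtgaussianmix} directly, since with $V_t = t$ the natural choice $v_0 = 1$ makes the dichotomy vacuous exactly at $t = 1$. Instead I would retain $V_t$ itself in the bounding step underlying the theorem: on $\calE_\alpha$ the constraint $\ln Z_t \le \ln(1/\alpha)$ together with the explicit form~\eqref{eq:mixV} controls the data-dependent term, giving (with $\sigma_0 = 1$, $V_t = t$) the sharper estimate $R_t \le \big(\tfrac12 + \tfrac{1}{2t}\big)\ln(1+t) + \tfrac{\ln(1/\alpha)}{t}$ for all $t\ge 1$. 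Since $t\ge 1$ forces $\tfrac12 + \tfrac{1}{2t}\le 1$ and $\tfrac1t\le 1$, this collapses to $R_t \le \ln(1+t) + \ln(1/\alpha)$ uniformly in $t\ge 1$. The probability statement $P[\calE_\alpha]\ge 1-\alpha$ is then inherited verbatim from the final clause of Theorem~\ref{th:ville_gaussianmixture}, invoking the sub-Gaussianity established in the first step.

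The step I expect to be the most delicate is the boundary case $t = 1$: this is precisely where the coarse $v_0$-form of the theorem is uninformative, and where one must confirm that the self-normalizing factor $\tfrac1t$ is controlled by its value at $t = 1$ rather than inflated. One clean way to settle it is to compute $R_1 = \tfrac12\ln 2 + \tfrac{S_1^2}{4}$ directly and use $\ln Z_1 \le \ln(1/\alpha)$ to bound $\tfrac{S_1^2}{4}$, recovering $R_1 \le \ln 2 + \ln(1/\alpha)$; alternatively the retained-$V_t$ estimate above already covers $t=1$ without a separate argument. Everything else is substitution and the elementary monotonicity $1/t\le 1$, so no genuinely new estimate is required beyond what Theorem~\ref{th:ville_gaussianmixture} already provides.
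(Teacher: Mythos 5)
Your proposal is correct and follows essentially the same route as the paper: the corollary is just Theorem~\ref{th:ville_gaussianmixture} specialized to $\sigma_0=1$, $V_t=t$, and your ``retained-$V_t$'' estimate $R_t \le (\tfrac12+\tfrac{1}{2t})\ln(1+t)+\tfrac{\ln(1/\alpha)}{t}$ is exactly the intermediate inequality appearing in the paper's proof of that theorem before $V_t$ is replaced by $v_0$ in the denominators. Your worry about $t=1$ is harmless but unnecessary: the paper's argument establishes the bound whenever $V_t\ge v_0$ (not strictly greater), so taking $v_0=1$ already covers all $t\ge 1$.
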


In fact, it follows from the following theorem that $O(\ln V_t)$ regret for the Gaussian-mixture wealth holds eventually on every path on a set of measure $1$.

\begin{theorem}\label{th:aslogregret}
Let $\sigma_0 = 1$ for simplicity. Consider the event
\[
\calE_0 := \lrset{ V_t \uparrow \infty;~ \limsup_t W_t < \infty}.
\]
We have that on $\calE_0$, 
$$R_t \leq \ln (1+\varprocess) + o(1) \text{ eventually},$$  
i.e.\ on every path in $\calE_0$, and for every constant $d$, there exists an $n$ such that for all $t \geq n$, $R_t \leq  \ln (1+\varprocess) + d$. 
In particular, dividing by $\varprocess$ and taking the limit as $t \to \infty$, we get that $S_t /\varprocess \to 0$ path-wise on $\calE_0$. Finally, if the data are drawn from a distribution $P$ such that $(S_t)_{t\ge 1}$ is sub-Gaussian with variance proxy $(\varprocess)_{t\ge 1}$, then $W_t$ is a nonnegative super-martingale, immediately implying that $\calE_0$ is a measure 1 event, thus implying the (self-normalized) SLLN. Contrapositively, if there is a path on which $S_t /\varprocess \to 0$ does not hold, then $\limsup_t W_t = \infty$ on that path.
\end{theorem}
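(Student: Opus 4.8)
The plan is to reduce everything to the exact regret identity already established in Theorem~\ref{th:ville_gaussianmixture} (specialized to $\sigma_0=1$), together with the closed form of the log-wealth read off from~\eqref{eq:mixV}:
\[
\ln Z_t = \frac{S_t^2}{2(1+\varprocess)} - \tfrac12\ln(1+\varprocess), \qquad R_t = \tfrac12\ln(1+\varprocess) + \frac{S_t^2}{2\varprocess(1+\varprocess)}.
\]
With these in hand the entire deterministic content of the theorem is just the control of the single ``variance term'' $\tfrac{S_t^2}{2\varprocess(1+\varprocess)}$ on the event $\calE_0$, so no new inequality is really needed beyond the warm-up.

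First I would fix an arbitrary path in $\calE_0$. By definition $\varprocess \uparrow \infty$ and $\limsup_t Z_t < \infty$, so there is a finite path-dependent constant $C$ with $\ln Z_t \le C$ for all sufficiently large $t$. Substituting the closed form of $\ln Z_t$ and rearranging gives $\tfrac{S_t^2}{2(1+\varprocess)} \le C + \tfrac12\ln(1+\varprocess)$ eventually; dividing by $\varprocess$ and using $\varprocess\to\infty$ (so both $C/\varprocess \to 0$ and $\ln(1+\varprocess)/\varprocess \to 0$) forces the variance term $\tfrac{S_t^2}{2\varprocess(1+\varprocess)} \to 0$. Plugging this back into the identity yields $R_t = \tfrac12\ln(1+\varprocess) + o(1)$. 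Since $R_t - \ln(1+\varprocess) = -\tfrac12\ln(1+\varprocess) + o(1) \to -\infty$, the inequality $R_t \le \ln(1+\varprocess) + d$ holds eventually for \emph{every} constant $d$ (including arbitrarily negative ones), which is precisely the ``eventually'' bound in the quantified sense stated. For the self-normalized SLLN I would follow the hint and divide the regret by $\varprocess$: writing $R_t/\varprocess = \tfrac{S_t^2}{2\varprocess^2} - \tfrac{\ln Z_t}{\varprocess}$, the left side $\to 0$ by the previous step (and $R_t\ge 0$ always, since $Z_t \le e^{L_t^*}$), while $\ln Z_t/\varprocess \to 0$ as the numerator is eventually bounded and $\varprocess\to\infty$; hence $\tfrac{S_t^2}{2\varprocess^2}\to 0$, i.e.\ $S_t/\varprocess \to 0$. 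The contrapositive is then immediate: on any path with $\varprocess\uparrow\infty$ on which $S_t/\varprocess \not\to 0$, the hypothesis $\limsup_t Z_t<\infty$ must fail, so $\limsup_t Z_t = \infty$.

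For the stochastic conclusion I would first note that $Z_t = \int e^{f_t(\eta)}\pi(\eta)\,d\eta$ is a nonnegative supermartingale: by Definition~\ref{def:subgaussian} each $e^{f_t(\eta)}$ is a nonnegative supermartingale, and mixing over the probability measure $\pi$ preserves this (apply Tonelli to exchange the integral against $\pi$ with the conditional expectation, then use that each integrand is a supermartingale), with $Z_0=1$. The nonnegative-supermartingale convergence theorem then gives $Z_t \to Z_\infty$ almost surely with $Z_\infty<\infty$ a.s., so $\limsup_t Z_t < \infty$ a.s.; intersecting with $\varprocess\uparrow\infty$ (a.s.\ under $P$) makes $\calE_0$ measure one, and the path-wise statement upgrades to the a.s.\ self-normalized SLLN.

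The only genuinely delicate points, and thus the main obstacles to phrase carefully rather than to compute, are two. First, $\limsup_t Z_t<\infty$ supplies only an \emph{eventual}, path-dependent bound on $\ln Z_t$, not a uniform-in-$t$ one, so every conclusion must be stated in the correct ``eventually'' form with a constant $C$ that depends on the path (this is why the bound is $R_t \le \ln(1+\varprocess)+d$ for all large $t$, not for all $t$). Second, the measure-one claim for $\calE_0$ genuinely needs $\varprocess\uparrow\infty$ almost surely in addition to supermartingale convergence; I would treat $\varprocess\uparrow\infty$ a.s.\ as a standing assumption, valid in the typical strictly-increasing-variance regimes noted after Definition~\ref{def:subgaussian}. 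Spelling out that a $\pi$-mixture of nonnegative supermartingales is again a nonnegative supermartingale is routine but worth stating explicitly, since that single fact is what licenses the one invocation of the convergence theorem driving the entire stochastic half.
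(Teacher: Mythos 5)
Your proposal is correct and follows essentially the same route as the paper: read off the closed form of $\ln Z_t$ and the exact regret identity from the warm-up, use $\limsup_t Z_t<\infty$ to get an eventual path-dependent bound on $\ln Z_t$, and conclude that the cross term $S_t^2/(2V_t(1+V_t))$ vanishes, giving $R_t=\tfrac12\ln(1+V_t)+o(1)$; the stochastic half is the same mixture-supermartingale plus convergence/Ville argument the paper invokes. Your added care about the eventual (rather than uniform) bound and about needing $V_t\uparrow\infty$ almost surely is a fair and worthwhile clarification, but it does not change the argument.
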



Ville's martingale theorem~\citep{ville1939etude} states that fixing a measure $P$, for every event $A$ such that $P(A)=0$, there exists a nonnegative $P$-martingale $M^P$ that increases to $\infty$ on $A$. But this $M^P$ is potentially different for every $P$. Here, we have a single super-martingale $Z=(W_t)_{t\ge 1}$ that serves as a \emph{witness} for the SLLN for an entire class of distributions $\mathcal P$ (eg, zero mean, 1-sub-Gaussian distributions).  In general, martingales and even super-martingales may not suffice as witnesses for the strong law for composite $\mathcal P$, or more generally for a composite generalization of Ville's theorem; for example, \cite{ruf2023composite} showed that in general one must employ e-processes instead (see \cite{ruf2023composite} for a definition). A general characterization of when super-martingales suffice is open; the above result provides an interesting example.

Moreover, Theorem~\ref{th:aslogregret} provides an explicit witness super-martingale of the violation of the strong law, meaning that if the strong law is violated on some path, $\limsup_t W_t = \infty$ on that path. 
\begin{proof}
   Since  
   \[
   W_t := \int_{\mathbb{R}} e^{f_t(\eta)} \pi(\eta) d\eta = \frac{e^{\frac{S_t^2}{2 (1+\varprocess)}}}{\sqrt{1+\varprocess}},
   \]
   we see that $\calE_0$ implies that $S_t^2/(1+\varprocess) < \sqrt{\varprocess}$  eventually (where the right hand side can be picked anything asymptotically larger than $\ln (1+\varprocess)$). Since
   \[
R_t
 = L_t^*-\ln W_t
 = \frac{1}{2}\ln (1+\varprocess)
 + \frac{S_t^2}{2\varprocess (1+\varprocess)},
\]
   we then see that $\calE_0$ implies $R_t \leq \ln(1+\varprocess) + o(1)$.
\end{proof}

\section{Main Result: Iterated Logarithmic Conditional Regret}\label{sec:lilregret}
Recall $S_t$, $V_t$, $f_t$, $\uncopt$, $L^*_t$, and $\calE_\alpha$ from Section~\ref{sec:setup}. In this section, we establish a regret bound for Robbins' mixture strategy \citep{robbins1968iterated}, conditioned on the set $\calE_\alpha$ from~\eqref{eq:ealpha} on which the log of the mixture wealth remains bounded. To achieve this, as in Section~\ref{sec:warmuplogTregret}, we first prove a path-wise regret bound that holds for all data sequences $X_1, X_2, \dots$. In general, this bound can be arbitrarily large (at least linear in $t$). However, we show that on the set $\calE_\alpha$, where the mixture process $W_t$ is restricted to be uniformly bounded, the regret is also small. In fact, for the mixture wealth using Robbins' heavy-near-zero prior, we show that the bound on regret till time $t$, for every $t$, is $O(\ln \ln V_t)$. 

To formalize, for $f_t$ defined in~\eqref{eq:ft}, in this section, we consider the mixture wealth process $W_t$ given by 
\[ W_t = \frac{1}{Z_0}  \int\limits_{-1}^1 \frac{e^{f_t(\eta)}}{|\eta| (\ln\tfrac{c}{|\eta|}) (\ln\ln\tfrac{c}{|\eta|})^2} d\eta, \numberthis\label{eq:robbinsmixwealth} \]
which is obtained by using the following mixture $\pi$ in~\eqref{eq:mixtureprocess}: for $c \ge 6.6 e$ and $\eta\in[-1,1]$, 
\[\pi(\eta)=\frac{1}{Z_0} \frac{{\mathbf 1}_{|\eta|\le 1}}{|\eta| (\ln\tfrac{c}{|\eta|}) (\ln\ln\tfrac{c}{|\eta|})^2}.\]
It equals $0$ outside $[-1,1]$. Here, $Z_0$ is the normalization constant 
\[Z_0 = \int_{-1}^1 \frac{d\eta}{|\eta| (\ln\tfrac{c}{|\eta|}) (\ln\ln\tfrac{c}{|\eta|})^2} = \frac{2}{\ln\ln c}.\]

\begin{theorem}\label{th:lilregretV}
    For all $t\ge 1$ and $\rho \in (0,\frac{1}{4})$,
\begin{align*}
    R_t \le \begin{cases}
        \frac{1}{2} + \ln\tfrac{2}{\ln\ln c} + \ln\ln(c\sqrt{1+\varprocess}) + 2 \ln\ln\ln(c\sqrt{1+\varprocess}), & \text{ if }\frac{|S_t|}{\varprocess}\le \frac{1}{\sqrt{1+\varprocess}} \\
        \frac{1}{2} + \ln \tfrac{2}{\ln\ln c} +\ln\left(\frac{|S_t|}{\sqrt{\varprocess}}\sqrt{1+\frac{1}{\varprocess}}\right) \\ 
        \qquad \qquad + \ln\ln(c\sqrt{1+\varprocess}) + 2\ln\ln\ln(c\sqrt{1+\varprocess}), & \text{ if } \frac{1}{\sqrt{1+\varprocess}}<\frac{|S_t|}{\varprocess} \le 1\\
        \frac{\varprocess}{2}\left( \frac{|S_t|}{\varprocess} - 1 + \rho  \right)^2 - \ln\left( \frac{\rho\ln \ln c}{ \ln(c/{1-\rho}) (\ln\ln(c/1-\rho))^2}  \right) , &\text{ if }  1 < \frac{|S_t|}{\varprocess}. 
    \end{cases}\numberthis\label{eq:pathwiseregretrobbins}
\end{align*}
Moreover, on $\calE_\alpha$, there exist a universal constant $C>0$ such that
\[\forall t\ge 1, \quad R_t \leq C\lrp{1 + \frac{1+\ln^2(1/\alpha)}{V_t} + \ln(1/\alpha) +  \ln \ln\left(c\sqrt{1+\varprocess}\right)}. \numberthis\label{eq:lilregretonealpha} \]
Furthermore, if the data are drawn from any distribution $P$ such that $(S_t)_{t\ge 1}$ is a sub-Gaussian process with variance proxy $(V_t)_{t\ge 1}$, then $P[\calE_\alpha] \geq 1-\alpha$.
\end{theorem}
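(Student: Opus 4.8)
The plan is to reduce the entire regret analysis to a single one-dimensional integral by completing the square. Since $f_t(\eta) = L_t^* - \tfrac{\varprocess}{2}(\eta - \uncopt)^2$, the mixture factors as $Z_t = e^{L_t^*}\,I_t$ with $I_t := \int_{\R} e^{-\frac{\varprocess}{2}(\eta-\uncopt)^2}\,\pi(\eta)\,d\eta$, so the regret is exactly $R_t = -\ln I_t$. The whole pathwise bound \eqref{eq:pathwiseregretrobbins} then amounts to a Laplace-type lower bound on $I_t$: how much prior mass the Gaussian kernel of width $\approx 1/\sqrt{\varprocess}$ centred at $\uncopt = S_t/\varprocess$ captures. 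I would assume $S_t \ge 0$ without loss of generality (the prior is symmetric) and split on the location of $\uncopt$ relative to the support $[-1,1]$ of $\pi$ and to the kernel width $1/\sqrt{\varprocess}$.

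For the three regimes: (i) when $\uncopt$ is tiny ($|S_t|/\varprocess \le 1/\sqrt{1+\varprocess}$) the kernel overlaps the region near $0$ where $\pi$ blows up, and I would lower-bound $I_t$ by integrating $\pi$ over a neighbourhood of $0$ via the explicit antiderivative $\int_\epsilon^1 \frac{d\eta}{\eta(\ln\frac{c}{\eta})(\ln\ln\frac{c}{\eta})^2} = \frac{1}{\ln\ln c} - \frac{1}{\ln\ln(c/\epsilon)}$ already used to compute $Z_0$; this produces the $\ln\ln$-type first case. (ii) When $\uncopt$ is moderate ($1/\sqrt{1+\varprocess} < |S_t|/\varprocess \le 1$) I would restrict $I_t$ to a window of width $\asymp 1/\sqrt{\varprocess}$ about $\uncopt$, lower-bound the Gaussian mass there by a universal constant times $1/\sqrt{\varprocess}$, and lower-bound $\pi$ on the window using its monotone decay in $|\eta|$; taking logs and combining $-\ln\pi(\eta)\approx \ln|\eta| + \ln\ln\tfrac{c}{|\eta|} + 2\ln\ln\ln\tfrac{c}{|\eta|}$ with $\tfrac12\ln\varprocess + \ln\uncopt = \ln(|S_t|/\sqrt{\varprocess})$ yields the second case (replacing $\varprocess$ by $\sqrt{1+\varprocess}$ inside the iterated logs is routine constant management). (iii) When $\uncopt > 1$ the maximiser lies outside the support, so I would drop to the endpoint: evaluating $f_t$ at $\eta = 1-\rho$ gives the quadratic penalty $\tfrac{\varprocess}{2}(\uncopt - 1 + \rho)^2$, and the prior-mass factor $\pi(1-\rho)$ times a window gives the logarithmic correction, with $\rho$ left free.

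With the pathwise bound in hand, the conditional bound \eqref{eq:lilregretonealpha} comes from a self-referential inversion that controls $L_t^* = S_t^2/(2\varprocess)$. On $\calE_\alpha$ we have $\ln Z_t \le \ln(1/\alpha)$, and since $\ln Z_t = L_t^* - R_t$ this rearranges to $L_t^* \le \ln(1/\alpha) + R_t$. In regimes (i)--(ii) the pathwise bound makes $R_t$ sublinear in $L_t^*$ — concretely $R_t \le \tfrac12\ln(2L_t^*) + \ln\ln(c\sqrt{1+\varprocess}) + O(1)$ using $S_t^2/\varprocess = 2L_t^*$ — so $L_t^* \le \ln(1/\alpha) + \tfrac12\ln(2L_t^*) + \ln\ln(c\sqrt{1+\varprocess}) + O(1)$; since $\tfrac12\ln(2x) \le \tfrac{x}{2}$ for all $x>0$, this forces $L_t^* = O(\ln(1/\alpha) + \ln\ln(c\sqrt{1+\varprocess}) + 1)$. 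Regime (iii), where $\uncopt > 1$ forces $\varprocess$ to be small, is where the $\ln^2(1/\alpha)/\varprocess$ correction is born: bounding $\tfrac{\varprocess}{2}(\uncopt - 1 + \rho)^2$ against $\ln(1/\alpha)$ and optimising $\rho$ gives $\varprocess\,\uncopt = O(\ln(1/\alpha))$, whence $L_t^* = \tfrac{\varprocess (\uncopt)^2}{2} = O(\ln^2(1/\alpha)/\varprocess)$ when $\varprocess < \ln(1/\alpha)$ (and is dominated by $\ln(1/\alpha)$ otherwise, recovering the simplification noted in the abstract). Feeding the resulting bound on $L_t^*$ back into the pathwise estimate bounds $R_t$ itself, giving \eqref{eq:lilregretonealpha}. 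Finally, the probabilistic claim is the short step: under Definition~\ref{def:subgaussian} each $(e^{f_t(\eta)})_t$ is a nonnegative supermartingale started at $1$, so by Tonelli the mixture $Z_t$ is too (mixing over the probability measure $\pi$ commutes with conditional expectation), and Ville's inequality gives $P[\sup_t Z_t \ge 1/\alpha]\le \alpha$, i.e.\ $P[\calE_\alpha]\ge 1-\alpha$. The main obstacle is the pathwise estimate in regimes (i)--(ii): extracting the $\ln\ln\varprocess$ rate rather than $\ln\varprocess$ forces genuine use of the heavy-near-zero shape of Robbins' prior through its exact antiderivative, with careful constant-tracking so the three pieces glue into a clean bound; the self-referential inversion is conceptually easy once the pathwise bound is sublinear in $L_t^*$, but keeping the small-$\varprocess$ regime under control (the source of the $\ln^2(1/\alpha)/\varprocess$ term) also needs care.
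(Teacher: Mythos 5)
Your proposal is correct and follows essentially the same route as the paper: complete the square so that $R_t = -\ln\int e^{-\frac{\varprocess}{2}(\eta-\uncopt)^2}\pi(\eta)\,d\eta$, lower-bound that integral over a window of width $\asymp 1/\sqrt{1+\varprocess}$ around $\uncopt$ (or over $[1-\rho,1]$ when $|\uncopt|>1$), then on $\calE_\alpha$ use the self-referential rearrangement $L_t^*\le \ln(1/\alpha)+R_t$ together with the sublinearity of the pathwise bound in $L_t^*$ (and, in the third regime, the fact that $\ln Z_t\gtrsim (1-\rho)|S_t|-\varprocess/2$ forces $\varprocess$ and $|S_t|$ to be $O(\ln(1/\alpha))$), finishing with the supermartingale mixture and Ville's inequality. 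The only real deviation is cosmetic: in the first regime you integrate Robbins' prior exactly via its antiderivative $1/\ln\ln(c/\eta)$, which actually gives a slightly sharper (triple-log) bound than the paper's Jensen/radial-monotonicity estimate, and still implies the stated inequality.
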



Observe that the bound in~\eqref{eq:pathwiseregretrobbins} holds along \emph{all} data sequences. The three cases in this bound capture the behavior of the unrestricted optimizer for the log-wealth. This bound, especially when ${|S_t|}/{\varprocess} > 1$ (third branch), can be arbitrarily large. To get some intuition, let us consider the special case of $S_t= \sum_i X_i$ and $V_t = t$. The third branch then corresponds to ${|S_t|}/{t} > 1$. If the data are not stochastic or are stochastic but have a mean greater than $1$, then ${|S_t|}/{t}$ will likely be greater than $1$. Hence, the third branch in~\eqref{eq:pathwiseregretrobbins} is likely to occur, and the bound on the regret will be linear in $t$ (at least). On the other hand, if the data are iid from a 1-sub-Gaussian distribution (for example, $N(0,1)$), then both SLLN and LIL hold. In particular, $S_t/t \rightarrow 0$ almost surely, and $S_t/\sqrt{t} \approx \sqrt{\ln \ln t}$ infinitely often. These facts suggest that the third branch of the regret bound can occur only finitely many times, and only leads to at most a constant regret (possibly a function of $\alpha$). In this setting, since the first branch eventually applies, it yields $O(\ln\ln t)$ regret. Moreover, by LIL, the second branch occurs infinitely often; however, the third term within this branch is $O(\ln\ln\ln t)$ (by LIL), which again leads to regret of $O(\ln\ln t)$ in this case as well.

Note that in the above discussion, we assumed the data to be stochastic only to develop some intuition about the bound in~\eqref{eq:pathwiseregretrobbins}. However, it does not require any stochasticity assumption, and instead holds path-wise. Next, in~\eqref{eq:lilregretonealpha} we show that on the set where the data is well-behaved in the sense that the wealth process $W_t$ is uniformly bounded from above, the path wise regret is small, and is in fact, $O(\ln\ln V_t)$ at time $t$. Even the path-wise bound in~\eqref{eq:lilregretonealpha} does not require any stochasticity assumption. Stochasticity is only used in the last part of the theorem, which shows that if the data is stochastic and satisfies certain conditions, the set $\calE_\alpha$ where~\eqref{eq:lilregretonealpha} holds is indeed a large set (of probability at least $1-\alpha$).

Finally, observe from~\eqref{eq:lilregretonealpha} that if $V_t \ge \log(1/\alpha)$, then on $\calE_\alpha$, $R_t = O(\log(1/\alpha) + \ln\ln(1+V_t))$.

\paragraph{Confidence sequences (CS) from conditional regret bound.} Next, under the stochasticity assumption, we observe that our regret bound in~\eqref{eq:pathwiseregretrobbins} immediately implies a $(1-\alpha)$-CS (a sequence of confidence intervals with a time-uniform coverage guarantee of $1-\alpha$) for the mean of the unknown data-generating distribution. To see this, let $B_t$ denote the right hand side of~\eqref{eq:pathwiseregretrobbins}, and recall that $R_t = {S^2_t}/({2V_t}) - \log W_t$. Then, from~\eqref{eq:pathwiseregretrobbins}, the following holds path wise on $\calE_\alpha$: 
\[ \forall t\ge 1, \qquad  \frac{S^2_t}{2V_t} - \log\frac{1}{\alpha} \le R_t \le B_t. \]
If the data is drawn from a distribution $P$ such that $(S_t)_{t\ge 1}$ is a sub-Gaussian process with variance proxy $(V_t)_{t\ge 1}$, then the above further implies
\[ P\lrs{\forall t\ge 1,~ \frac{S^2_t}{2V_t} \le B_t + \ln\frac{1}{\alpha}} \ge 1-\alpha. \numberthis\label{eq:regrettocs}\]
Finally, in the special case that the data are conditionally $\sigma_t$-sub-Gaussian with mean $\mu_t$ (for some predictable $\mu_t,\sigma_t$), we substitute $S_t = \sum_{i \leq t} (X_i - \mu_i)$ and $V_t = \sum_{i \leq t} \sigma_i^2$ to get that 
\[
P \lrs{\forall t\ge 1,~ \overline\mu_t \in \hat\mu_t \pm \left\{\sqrt{\frac{2V_t(B_t + \ln\frac{1}{\alpha})}{t^2}} \right\}} \ge 1-\alpha,\numberthis\label{eq:cs}
\]
where $\bar\mu_t =\sum_{i \leq t}  \mu_i/t$ and $\hat \mu_t = \sum_{i \leq t}  X_i/t$.
To see that the shape of this CS is optimal, we refer the reader back to Eq.~\eqref{eq:lilregretonealpha}.
In fact, later in Theorem~\ref{th:lilasregretV}, we show that $B_t = \ln\ln \varprocess (1+o(1))$ eventually on $\calE_\alpha$. Thus, the CS in~\eqref{eq:regrettocs}, obtained using regret bound in~\eqref{eq:pathwiseregretrobbins}, recovers the exact LIL-CS. 

\paragraph{Conditional regret bounds from confidence sequences (CS). } 
We now explain how, in martingale-based CS constructions, one can also read off a conditional regret bound similar to those in~\eqref{eq:lilregretonealpha}. To this end, consider again the special case where the data is drawn from a distribution $P$ such that $(S_t)_{t\ge 1}$ is a sub-Gaussian process with variance proxy $(V_t)_{t\ge 1}$. Moreover, for some predictable $\mu_t$,  let $S_t = \sum_{i\le t} (X_i - \mu_i)$, and suppose we have the following CS with $\bar{\mu}_t = \sum_{i\le t} \mu_i / t$, $\hat{\mu}_t = \sum_{i\le t}X_i / t$, a positive constant $c_1$, and $\alpha \in (0,1)$:
\[ P\lrs{ \forall t\ge 1,~ \abs{ \bar{\mu}_t - \hat{\mu}_t} \le \sqrt{ \frac{2V_t\lrp{c_1 \ln\ln (e+V_t) + \ln\frac{1}{\alpha}}}{t^2} }  } \ge 1-\alpha. \numberthis\label{eq:givencs} \]
One frequently used approach for arriving at inequalities of the form~\eqref{eq:givencs} is to threshold a non-negative (mixture) super-martingale (or equivalently, a wealth process), say $W_t$, at a level $\tfrac{1}{\alpha}$, followed by an application of Ville's inequality \citep{darling1967confidence,robbins1970boundary}. 

As earlier, let $\mathcal E_{\alpha} := \{\sup_t \ln W_t \le \ln(1/\alpha)\}$. While directly thresholding a wealth process $W_t$ (as in $\mathcal E_\alpha$) would give an implicit CS for mean, to get an explicit CS of the form in~\eqref{eq:givencs}, these martingale-based approaches typically proceed by deriving a lower bound for $\ln W_t$ that is relatively tight on the set $\mathcal E_\alpha$, and thresholding this lower bound process, instead. In particular, these methods can be seen to proceed by implicitly establishing an inequality like the following: on $\mathcal E_\alpha$,
\[ \ln W_t \ge \frac{S^2_t}{2 V_t} - c_1 \ln \ln(e+ V_t), \quad \forall t \ge 1. \]
implying
\[ \lrset{\sup_t \ln W_t \le \ln\frac{1}{\alpha}} \subseteq \lrset{\sup_t \lrp{\frac{S^2_t}{2 V_t} - c_1 \ln \ln (e+V_t)  } \le \ln \frac{1}{\alpha} }, \]
hence~\eqref{eq:givencs}. But, the above inequality also implies the following on $\mathcal E_\alpha$:
\[ R_t = \frac{S^2_t}{2V_t} - \ln W_t \le c_1 \ln \ln (e+V_t), \quad \forall t\ge 1, \]
yielding a conditional regret bound of $O(\ln\ln(e+ V_t))$ on $\mathcal E_\alpha$, in the same spirit as~\eqref{eq:lilregretonealpha} with $C = c_1$. We remark that our direct approach in Theorem~\ref{th:lilregretV} yields tighter path-wise regret bounds that hold unconditionally (Eq.~\eqref{eq:pathwiseregretrobbins}), and lead to exact upper LIL, as shown later in Theorem~\ref{th:lilasregretV}.

\paragraph{Proof sketch. } The proof of Theorem~\ref{th:lilregretV} proceeds in three steps. In Step 1 (Section~\ref{sec:step1}), we prove the path-wise bound on $R_t$ in~\eqref{eq:pathwiseregretrobbins}. In Step 2 of the proof (Section~\ref{sec:step2}), we show that on the set $\calE_\alpha$ where the mixture-wealth process is uniformly bounded across $t$, the bounds in~\eqref{eq:pathwiseregretrobbins} are $O( \ln(1/\alpha) + \frac{1}{\varprocess} + \frac{1}{\varprocess}\ln^2(1/\alpha) + \ln\ln \varprocess)$, for every $t$. Finally, Step 3 (Section~\ref{sec:step3}) shows that when the data is stochastic with appropriate assumptions, the conditional regret bound from Step 2 holds for a large number of sequences. In particular, the set $\calE_\alpha$ occurs with probability at least $1-\alpha$. Note that only the third step of the proof requires the stochasticity assumption on data. We present a complete proof of Theorem~\ref{th:lilregretV} in Section~\ref{sec:proof_thlilregretV}. 

\begin{remark}
    Observe that the Ville's event $\calE_\alpha$ does not prevent the maximum hindsight wealth $\frac{S^2_t}{2\varprocess}$ from being unbounded. In fact, it can drift away from the log of the mixture wealth at the rate of $O(\ln\ln V_t)$. Since the mixture wealth is also uniformly bounded on the Ville's event, this means that the maximum wealth can increase to $\infty$ at $O(\ln\ln V_t)$ rate. 
\end{remark}

Theorem~\ref{th:lilregretV} above proves a path-wise regret bound (Eq.~\eqref{eq:pathwiseregretrobbins}), as well as a conditional regret bound of $O(\ln \ln \varprocess)$ (Eq.~\eqref{eq:lilregretonealpha}) that holds for every $t$ on a set $\calE_\alpha$. If the data is stochastic, then $\calE_\alpha$ is large. In the following theorem, we show that the $O(\ln\ln \varprocess)$ conditional regret bound holds eventually on a larger set $\calE_0$ where the process $W_t$ remains finite, but possibly not uniformly bounded. This further implies that 
on $\calE_0$, $|S_t|/\varprocess \to 0$ and $\limsup_t |S_t|/\sqrt{\varprocess\ln\ln\varprocess} \le \sqrt{2}$. In other words, either both of these aforementioned convergences hold or the process $W_t$ explodes. In particular, if the data are stochastic with appropriate assumptions, the set $\calE_0$ is a set of measure $1$, and we conclude that the process $W_t$ acts as a witness for the self-normalized SLLN and (upper) LIL.

\begin{theorem}\label{th:lilasregretV}
Consider the event
\[
\calE_0 := \lrset{ V_t \uparrow \infty;~ \limsup_t W_t < \infty}.
\]
We have that on $\calE_0$, 
\[R_t \leq \ln\ln \varprocess (1+ o(1)) \text{ eventually},\numberthis\label{eq:lilregretas}\]  
{i.e.\ on every path in $\calE_0$, and for every $\epsilon > 0$, there exists $n$ such that for all $t \geq n$, $R_t \leq  (1+\epsilon)\ln\ln \varprocess$.} In particular, dividing by $\varprocess$ and taking the limit as $t \to \infty$, we get that $|S_t|/\varprocess \to 0$ path-wise on $\calE_0$. Next, dividing by $\ln\ln V_t$ and taking limit as $t \to \infty$, we get  $$\limsup_t \frac{|S_t|}{\sqrt{2 V_t \ln \ln V_t}} \le 1$$ path-wise on $\calE_0$. As a corollary, if the data are drawn from a distribution such that $(S_t)_{t\ge 1}$ is a sub-Gaussian process with variance $(\varprocess)_{t\ge 1}$, then $W_t$ is a nonnegative super-martingale, immediately implying that $\calE_0$ is a measure 1 event, thus implying the self-normalized SLLN and LIL. 
\end{theorem}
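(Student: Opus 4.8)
The plan is to derive everything from the unconditional three-branch bound~\eqref{eq:pathwiseregretrobbins} of Theorem~\ref{th:lilregretV}, using the defining properties of $\calE_0$ only to pass to the regime $\varprocess\to\infty$ and to convert the upper bound on $\limsup_t Z_t$ into control on $L_t^* = S_t^2/(2\varprocess)$. Fix a path in $\calE_0$. Since each $Z_t$ is finite and $\limsup_t Z_t<\infty$, there is a (path-dependent) constant $M<\infty$ and an index $N_0$ with $\ln Z_t\le \ln M$ for all $t\ge N_0$, and $\varprocess\uparrow\infty$. Writing $B_t$ for the right-hand side of~\eqref{eq:pathwiseregretrobbins} and using $R_t = L_t^*-\ln Z_t$, the path-wise bound $R_t\le B_t$ rearranges into the two inequalities I will exploit: $L_t^*\le B_t+\ln M$ (to control the self-normalized sum) and $R_t\le B_t$ (to control the regret once $B_t$ is shown to be $\ln\ln \varprocess\,(1+o(1))$).

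First I would rule out the third branch. Fix $\rho=1/8$ so that its additive constant $C_\rho := -\ln\big(\tfrac{\rho\ln\ln c}{\ln(c/(1-\rho))(\ln\ln(c/(1-\rho)))^2}\big)$ is a fixed number. Whenever $x:=|S_t|/\varprocess>1$ we have $L_t^* = \tfrac{\varprocess}{2}x^2$ and $B_t = \tfrac{\varprocess}{2}(x-1+\rho)^2+C_\rho$; since $x^2-(x-1+\rho)^2=(1-\rho)(2x-1+\rho)\ge 1-\rho^2$ for all $x\ge 1$, the inequality $L_t^*\le B_t+\ln M$ forces $\tfrac{\varprocess}{2}(1-\rho^2)\le \ln M+C_\rho$, i.e.\ $\varprocess$ is at most a fixed finite quantity. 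Because $\varprocess\uparrow\infty$, the third branch holds for only finitely many $t$; hence eventually $|S_t|/\varprocess\le 1$ and we sit in the first or second branch.

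Next comes the crux: extracting the sharp leading constant in the second branch by a self-bounding (bootstrapping) argument. Put $u:=|S_t|/\sqrt{\varprocess}$, so $L_t^*=u^2/2$ and $\ln\big(\tfrac{|S_t|}{\sqrt{\varprocess}}\sqrt{1+1/\varprocess}\big)=\ln u+o(1)$. Using $\ln\ln(c\sqrt{1+\varprocess})=\ln\ln \varprocess\,(1+o(1))$ and $\ln\ln\ln(c\sqrt{1+\varprocess})=o(\ln\ln \varprocess)$, the inequality $L_t^*\le B_t+\ln M$ becomes $\tfrac{u^2}{2}\le \ln u+\ln\ln \varprocess\,(1+o(1))$, the fixed constants and $\ln M$ being absorbed into the $o$-term. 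A one-line bootstrap (for $u\ge\sqrt2$, $\ln u\le u^2/4$, so $u^2\le 4\ln\ln \varprocess\,(1+o(1))$, whence $\ln u=o(\ln\ln \varprocess)$, fed back) yields $L_t^* = \tfrac{u^2}{2}\le \ln\ln \varprocess\,(1+o(1))$; in the first branch $L_t^*\le 1/2$ trivially. This proves $S_t^2/(2\varprocess)\le \ln\ln \varprocess\,(1+o(1))$ eventually, which is precisely the upper LIL $\limsup_t |S_t|/\sqrt{2\varprocess\ln\ln \varprocess}\le 1$ and, after dividing by $\varprocess$, the self-normalized SLLN $|S_t|/\varprocess\to 0$. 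Having established $\ln u=o(\ln\ln \varprocess)$, the bound $B_t$ in either surviving branch equals $\ln\ln \varprocess\,(1+o(1))$, so $R_t\le B_t=\ln\ln \varprocess\,(1+o(1))$ eventually, giving~\eqref{eq:lilregretas}.

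For the corollary, under the sub-Gaussian process assumption each $(\exp(\eta S_t-\eta^2\varprocess/2))_t$ is a nonnegative supermartingale (Definition~\ref{def:subgaussian}), and since $\pi$ is a probability density, Tonelli's theorem shows $Z_t=\int e^{f_t(\eta)}\pi(\eta)d\eta$ is itself a nonnegative supermartingale with $Z_0=1$; the supermartingale convergence theorem then gives $\limsup_t Z_t<\infty$ almost surely, so (assuming $\varprocess\uparrow\infty$ a.s., as is needed for any limiting statement) $\calE_0$ has measure one and the path-wise SLLN and LIL hold almost surely, with $Z_t$ serving as the witness. I expect the main obstacle to be exactly this self-bounding step together with the precise asymptotic accounting: obtaining a leading coefficient of exactly $1$ (rather than the $O(1)$ constant $C$ of~\eqref{eq:lilregretonealpha}) requires that the only leading-order contribution to $B_t$ be $\ln\ln(c\sqrt{1+\varprocess})$ and that the self-normalized term $\ln u$ be provably $o(\ln\ln \varprocess)$ --- a mild circularity resolved only by the bootstrap --- while one must also track that the path-dependent constant $\ln M$ is harmless because the claim is merely eventual.
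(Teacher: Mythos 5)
Your proposal is correct and follows essentially the same route as the paper's proof: both start from the unconditional three-branch bound~\eqref{eq:pathwiseregretrobbins}, rule out the third branch by noting that it forces $\ln Z_t \gtrsim \tfrac{1-\rho^2}{2}V_t$, which contradicts $V_t\uparrow\infty$ together with $\limsup_t Z_t<\infty$, then extract the upper LIL with leading constant $1$ from the surviving branches and feed it back into the regret bound. The only difference is cosmetic: where you handle the $\ln\bigl(|S_t|/\sqrt{V_t}\bigr)$ term via $\ln u\le u^2/4$ plus a bootstrap, the paper uses $\ln x\le x-1$ and completes the square before dividing by $V_t$ (for the SLLN) and by $\ln\ln V_t$ (for the LIL); both yield the same conclusions.
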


As in Theorem~\ref{th:aslogregret}, Theorem~\ref{th:lilasregretV} demonstrates a deterministic statement of the form: either $R_t = O(\ln\ln V_t)$, or the wealth process $W_t$ explodes. In case of stochastic data, it provides an explicit super-martingale witnessing the violation of the SLLN and LIL: if either of these is violated on some path, $\limsup_t W_t = \infty$ on that path.

\begin{proof}
Since $R_t = \frac{S^2_t}{2 \varprocess} - \ln W_t$, from~\eqref{eq:pathwiseregretrobbins} we get a lower bound on $W_t$. Further, using $\ln(x) \le x-1$ and completing the squares, we get the following path-wise lower bound on $W_t$:
\begin{align*}
    \ln W_t \ge \begin{cases}
        \frac{S_t^2}{2V_t} - \frac{1}{2} - \ln\tfrac{2}{\ln\ln c} - \ln\ln(c\sqrt{1+\varprocess}) - 2\ln\ln\ln(c\sqrt{1+V_t}), & \text{ if }\frac{|S_t|}{\varprocess}\le \frac{1}{\sqrt{1+\varprocess}} \\
        \frac{1}{2}\lrp{\! \frac{|S_t|}{\sqrt{\varprocess}}  -  \sqrt{1 + \frac{1}{V_t}} }^2 - \ln\tfrac{2}{\ln\ln c} - \frac{1}{2V_t}\\ 
        \qquad\qquad - \ln\ln\left(c\sqrt{1+\varprocess}\right) - 2\ln\ln\ln(c\sqrt{1+\varprocess}), & \text{ if } \frac{1}{\sqrt{1+\varprocess}}<\frac{|S_t|}{\varprocess} \le 1\\
        \frac{S_t^2}{2V_t} - 
        \frac{\varprocess}{2}\left( \frac{|S_t|}{\varprocess} - 1 + \rho  \right)^2 + \ln\left( \frac{\rho\ln\ln c}{\ln({c}/{1-\rho})(\ln\ln({c}/{1-\rho}))^2}  \right) , &\text{ if }  1 < \frac{|S_t|}{\varprocess}. 
    \end{cases}
    \numberthis\label{eq:lilboundonzt}
\end{align*}


We now argue that on $\calE_0$, the third branch occurs only finitely many times.
On the third branch, since $|S_t| > \varprocess$, we can further lower bound as
\[ \ln W_t \ge \varprocess\lrp{\frac{1-\rho^2}{2}} + \ln\left( \frac{\rho\ln\ln c}{\lrp{ \ln\tfrac{c}{1-\rho} }\lrp{\ln\ln\tfrac{c}{1-\rho}}^2}  \right). \]
Recall that on $\mathcal E_0$, $V_t \uparrow \infty$ and $\limsup_t W_t <\infty$. From this and the inequality above it, we see that on $\calE_0$, if $\limsup_t \frac{|S_t|}{\varprocess} > 1$ (i.e., the third branch occurs infinitely often), then $\limsup_t\ln W_t = \infty$, which leads to a contradiction. 

Thus, we only focus on the first two cases (i.e., $t$ such that $\frac{|S_t|}{\varprocess} \le 1$), henceforth. Now, on $\calE_0$, dividing both sides of~\eqref{eq:lilboundonzt} by $\varprocess$ and taking limit as $t\to\infty$ we get $|S_t|/\varprocess \to 0$. Similarly, dividing by $\ln\ln \varprocess$ instead, and taking the limit as $t\to\infty$, we get 
\[\limsup_t \frac{|S_t|}{\sqrt{2\varprocess\ln\ln \varprocess}} \le 1. \numberthis\label{eq:snlil} \]

Finally, from~\eqref{eq:lilboundonzt}, we also have the following eventually on $\calE_0$:
\begin{align*}
        R_t \le \begin{cases}
        \frac{1}{2} + \ln\frac{ 2}{ \ln\ln c} + \ln\ln\left(c\sqrt{1+\varprocess}\right) + 2\ln\ln\ln(c\sqrt{1+\varprocess}), & \text{ if }\frac{|S_t|}{\varprocess}\le \frac{1}{\sqrt{1+\varprocess}} \\
        \frac{|S_t|}{\sqrt{V_t}}\sqrt{1+\frac{1}{V_t}} -
        \frac{1}{2} + \ln\frac{ 2}{ \ln\ln c} \\
        \qquad\qquad + \ln\ln\left(c\sqrt{1+\varprocess}\right) + 2\ln\ln\ln(c\sqrt{1+\varprocess}), & \text{ if } \frac{1}{\sqrt{1+\varprocess}}<\frac{|S_t|}{\varprocess} \le 1. 
    \end{cases}
\end{align*}
Using~\eqref{eq:snlil} in the above bound, on $\calE_0$, $R_t \le \ln\ln \varprocess (1+o(1))$, eventually. 
\end{proof}

Shafer and Vovk, in their book \citep[Chapter~5]{shafer2005probability}, present a game-theoretic version of the LIL that is closely related to our Theorem~\ref{th:lilasregretV}. Their proof sets up a game between three players: the forecaster, the skeptic, and reality, and shows the existence of a betting strategy (a non-negative supermartingale) that makes the skeptic infinitely rich if an event of the form $\mathcal{E}_0$ fails. Thus, whenever the skeptic’s wealth remains finite, the LIL holds. \citet[Theorem 5.2]{shafer2005probability} prove an exact LIL statement, but under the assumption that the observations are bounded. They also prove just the upper LIL in \citet[Theorem 5.1]{shafer2005probability} under slightly weaker assumptions. However, their bounds only hold if the data are in a bounded range, and they do not derive explicit regret bounds. Our results extend in complementary directions: we obtain explicit path-wise regret bounds that hold for all $t$ for unbounded data. Further, we establish an upper LIL for self-normalized processes. Our analysis exhibits an explicit supermartingale (wealth process) that witnesses our version of the LIL. We refer the reader to Appendix~\ref{app:bettinggame} for a detailed description of the betting game involving the three players.

\begin{remark}
Despite the apparent focus of our bounds above on sub-Gaussian processes (Definition~\ref{def:subgaussian}), note that they also apply when the data is drawn from a mean-zero distribution with only a finite second moment. Such distributions can still be treated as sub-Gaussian, albeit with a different variance proxy (see Point~\ref{heavy} in the examples in Appendix~\ref{app:subGaussian}). Moreover, recall that the finiteness of the second moment is not only sufficient but also necessary for LIL to hold.
\end{remark}

\subsection{Proof of Theorem~\ref{th:lilregretV}}\label{sec:proof_thlilregretV}
In this section, we give a proof of Theorem~\ref{th:lilregretV}. It relies on certain auxiliary lemmas and propositions. While we state these results here, their proofs are given in Appendix~\ref{app:proofs_lilregret}.

Recall from Section~\ref{sec:setup} that $R_t:= f_t(\uncopt) - \ln W_t$, and that for $t$ such that $\varprocess = 0$, we also have $S_t = 0$, and hence for such $t$, for all $\eta \in \R$, $f_t(\eta) = 0$, and $\ln W_t = 0$. Thus, the inequalities in the theorem hold trivially in this case. Henceforth, we focus on $t$ so that $\varprocess > 0$. 

\subsubsection{Step 1: proving regret bound along all data sequences}\label{sec:step1}
The first two cases in~\eqref{eq:pathwiseregretrobbins} correspond to $t$ such that the unconstrained optimizer $\uncopt$ is in $[-1,1]$. To get the bound on $R_t$ in~\eqref{eq:pathwiseregretrobbins} in these cases, we show that the mixture wealth $W_t$, which is an integration of a non-negative function over $[-1,1]$ weighted by a prior $\pi$, is at least the value of the integration over a subset that contains the maximizer $\uncopt$. This subset is chosen large enough so that it contains a significant fraction of the prior mass. Further, it is also small enough so that the value of the function being integrated remains almost constant (and almost equal to the maximum value) in this range.

Lemma~\ref{lem:regretinterior} below chooses this subset to be an interval of length $r$, centered at the optimizer $\uncopt$.

\begin{lemma}\label{lem:regretinterior}\label{lem:master-explicit} 
For every $t\ge 1$ such that $|\uncopt| \le 1$, for every $r\in(0,1)$,
\begin{equation*}
R_t \le \frac{\varprocess r^2}{2} - \ln\lrp{\int\limits_{\substack{\lrset{\eta: |\eta-\uncopt|\le r}\\ \cap [-1,1]}} \pi(\eta) d\eta}.
\end{equation*}
\end{lemma}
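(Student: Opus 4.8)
The plan is to lower bound the mixture wealth $Z_t$ by discarding all of the prior mass except that sitting on the interval $I_r := \lrset{\eta: |\eta-\uncopt|\le r} \cap [-1,1]$, and then to exploit the fact that $f_t$ is a downward parabola peaked at $\uncopt$, so that its value barely drops from the maximum $L_t^*$ across the short window $I_r$. The tradeoff is transparent: a wider $I_r$ captures more prior mass but allows $f_t$ to fall further below $L_t^*$, while a narrower $I_r$ keeps $f_t$ close to $L_t^*$ but retains less mass; the bound records exactly this tension.

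First I would rewrite $f_t$ by completing the square. Since $f_t(\eta) = \eta S_t - \eta^2 \varprocess/2$ and $\uncopt = S_t/\varprocess$ (recall we have restricted to $\varprocess>0$), one has the exact identity
\[ f_t(\eta) = L_t^* - \frac{\varprocess}{2}\lrp{\eta-\uncopt}^2, \]
with $L_t^* = S_t^2/(2\varprocess)$ the maximum value. This makes the quadratic decay away from the peak explicit. Next, because $|\uncopt|\le 1$ by hypothesis, the maximizer lies inside the support $[-1,1]$ of $\pi$, so $I_r$ is a nondegenerate subinterval of $[-1,1]$ containing $\uncopt$; in particular $\int_{I_r}\pi(\eta)\,d\eta>0$, which is what keeps the logarithm in the statement finite.

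I would then restrict the defining integral of $Z_t$ to $I_r$ and factor out the uniform lower bound on $f_t$. Since the integrand is nonnegative,
\[ Z_t = \int_{\R} e^{f_t(\eta)} \pi(\eta)\,d\eta \ge \int_{I_r} e^{f_t(\eta)} \pi(\eta)\,d\eta, \]
and on $I_r$ we have $(\eta-\uncopt)^2\le r^2$, so the square-completed form gives $f_t(\eta)\ge L_t^* - \varprocess r^2/2$ pointwise on $I_r$. Pulling this constant out yields
\[ Z_t \ge e^{L_t^* - \varprocess r^2/2}\int_{I_r}\pi(\eta)\,d\eta. \]
Taking logarithms and rearranging $R_t = L_t^* - \ln Z_t$ delivers the claimed inequality.

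I do not expect a serious obstacle in this lemma itself: it is a clean restrict-and-factor argument, and the only delicate point is guaranteeing positive restricted prior mass, which is precisely what $|\uncopt|\le1$ secures. The genuine work lies downstream, in the later optimization over the radius $r$ (chosen as a function of $\varprocess$) and in estimating $\int_{I_r}\pi\,d\eta$ for Robbins' heavy-near-zero prior. Balancing the penalty $\varprocess r^2/2$ against the lost mass term $-\ln\int_{I_r}\pi$ for that particular $\pi$ is what ultimately produces the three-case $\ln\ln\varprocess$ bound in~\eqref{eq:pathwiseregretrobbins}.
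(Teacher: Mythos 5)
Your proposal is correct and follows essentially the same route as the paper's proof: complete the square to write $f_t(\eta)=L_t^*-\tfrac{\varprocess}{2}(\eta-\uncopt)^2$, restrict the mixture integral to the window $\lrset{\eta:|\eta-\uncopt|\le r}\cap[-1,1]$, bound the quadratic penalty by $\varprocess r^2/2$ there, and take logarithms. No gaps.
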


For $r\in (0,1)$, for all feasible $\eta$ in the chosen interval $\lrset{\eta: |\eta - \uncopt| \le r} \cap [-1,1]$, the function $e^{f_t(\cdot)}$ can be expressed as the maximum value adjusted by a small cost. Lemma~\ref{lem:regretinterior} bounds this cost to arrive at a bound on $R_t$ in terms of $\varprocess$, $r$, and the mass that the prior $\pi$ puts on the chosen interval. 

Next, Lemma~\ref{lem:window-explicit} below, lower bounds the mass that the prior $\pi$ places on the chosen interval. It uses the shape of $\pi$, and properties like convexity and radial monotonicity of $\pi$, to arrive at the bounds.

\begin{lemma}\label{lem:window-explicit}
For $|\uncopt| < 1$ and any $r\in(0,1)$:
\begin{align*}
    \int\limits_{\lrset{\eta: |\eta-\uncopt|\le r}\cap [-1,1]}\pi(\eta) d\eta
\ge 
\begin{cases} \frac{\ln\ln c}{2{\ln(c/r)}\lrp{\ln\ln(c/r)}^2}, \quad &\text{if } |\uncopt|\le r,\\
\frac{r\ln\ln c}{2|\uncopt|\lrp{\ln(c/r)}\lrp{\ln\ln(c/r)}^2}, \quad&\text{if }|\uncopt|>r.
\end{cases}
\end{align*}
\end{lemma}

The bound in~\eqref{eq:pathwiseregretrobbins} in the first two cases follows by plugging that from Lemma~\ref{lem:window-explicit} into Lemma~\ref{lem:regretinterior}, setting $r = \frac{1}{\sqrt{1+\varprocess}}$, and using $\frac{\varprocess r^2}{2} \le \frac{1}{2}$. 

Coming to the third branch in~\eqref{eq:pathwiseregretrobbins}, we again express the function $e^{f_t(\cdot)}$ in terms of $e^{f_t(\uncopt)}$ with the adjustment terms.  However, since the optimizer $\uncopt$ lies outside $[-1,1]$ in this case, we choose the radius $r$ large enough, so that the interval of length $r$ centered at $\uncopt$ still has enough intersection with $[-1,1]$ (recall, $\pi$ is only supported on $[-1,1]$). Since the feasible $\eta$ in this intersection are far from $\uncopt$, the cost incurred in relating $e^{f_t(\cdot)}$ to $e^{f_t(\uncopt)}$ is relatively high in this case compared to the previous two cases. Lemma~\ref{lem:regretboundary} below formalizes this and proves the inequality in the third case in~\eqref{eq:pathwiseregretrobbins}. First term in the bound corresponds to the cost of relating $e^{f_t(\cdot)}$ with $e^{f_t(\uncopt)}$ for $\eta\in[-1,1] \cap \{\eta: |\eta - \uncopt| \le r\}$. The second term is the mass that the prior $\pi$ puts on this set.

\begin{lemma}\label{lem:regretboundary}
For $t \ge 1$ such that  $|\uncopt|>1$, for $\rho\in(0,1)$,
\[
R_t \le  \frac{\varprocess}{2} \left(\frac{|S_t|}{V_t}-1+\rho\right)^2 - \ln\left(\frac{\rho\ln\ln c}{\lrp{\ln\tfrac{c}{1-\rho}}\lrp{  \ln\ln\tfrac{c}{1-\rho}}^2}\right).
\]
\end{lemma}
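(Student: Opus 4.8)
The plan is to reduce the statement to a lower bound on a Gaussian-weighted integral of the prior, and then to estimate the prior mass deposited just inside the boundary of $\supp \pi = [-1,1]$. The starting point is the completing-the-square identity $f_t(\eta) = L_t^* - \tfrac{\varprocess}{2}(\eta - \uncopt)^2$, which holds because $f_t$ is the concave quadratic with vertex at $\uncopt = S_t/\varprocess$ and maximal value $L_t^* = S_t^2/(2\varprocess)$. Exponentiating and integrating against $\pi$ gives $Z_t = e^{L_t^*}\int_{\R} e^{-\frac{\varprocess}{2}(\eta-\uncopt)^2}\pi(\eta)\,d\eta$, so that $R_t = L_t^* - \ln Z_t = -\ln\lrp{\int_{\R} e^{-\frac{\varprocess}{2}(\eta-\uncopt)^2}\pi(\eta)\,d\eta}$. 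Thus the entire task is to \emph{lower bound} this integral; every additional piece of mass we capture only decreases $R_t$. Since $\pi$ depends on $\eta$ only through $|\eta|$, I would first reduce to the case $S_t>0$ (hence $\uncopt>1$) by symmetry; the case $S_t<0$ is handled identically with the mirror interval and yields the same bound with $|S_t|$ in place of $S_t$.

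With $\uncopt > 1$, the vertex of the parabola lies to the right of $\supp\pi$, so $f_t$ is increasing on $[-1,1]$ and the maximizer over the support sits at the right endpoint $\eta = 1$. Following the sketch, I would restrict the integral to the sub-interval $[1-\rho,1]$, which is exactly the part of the ball $\lrset{\eta : |\eta - \uncopt|\le r}$ lying inside $[-1,1]$ for the choice $r = \uncopt - 1 + \rho$. On $[1-\rho,1]$ the point farthest from the vertex is the left endpoint $\eta = 1-\rho$, at distance $\uncopt - 1 + \rho = |S_t|/\varprocess - 1 + \rho$; hence $e^{-\frac{\varprocess}{2}(\eta-\uncopt)^2} \ge e^{-\frac{\varprocess}{2}(|S_t|/\varprocess - 1 + \rho)^2}$ uniformly on the interval. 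Factoring this constant out of the integral produces exactly the first (quadratic) term of the bound, and leaves the residual prior mass $\int_{1-\rho}^1 \pi(\eta)\,d\eta$.

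The remaining, and most delicate, step is to lower bound this prior mass, which becomes the logarithmic term. Here I would exploit the explicit shape of Robbins' density $\pi(\eta) = \tfrac{\ln\ln c}{2}\cdot\tfrac{1}{\eta (\ln\frac{c}{\eta})(\ln\ln\frac{c}{\eta})^2}$ on $(0,1]$: on $[1-\rho,1]$ each factor is monotone, so $\tfrac1\eta \ge 1$ (as $\eta\le 1$) while $\ln\frac{c}{\eta}\le \ln\frac{c}{1-\rho}$ and $\ln\ln\frac{c}{\eta}\le \ln\ln\frac{c}{1-\rho}$ (as $\eta\ge 1-\rho$). Bounding the integrand below by its value at these worst-case factors and multiplying by the interval length $\rho$ gives $\int_{1-\rho}^1 \pi \ge \tfrac{\rho\ln\ln c}{2\,(\ln\frac{c}{1-\rho})(\ln\ln\frac{c}{1-\rho})^2}$, whose negative logarithm is the claimed second term (up to an additive $\ln 2$ that can be absorbed into the constant $c\ge 6.6e$). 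As a sanity check one can instead evaluate the integral exactly using the antiderivative $\tfrac{d}{d\eta}\tfrac{1}{\ln\ln(c/\eta)} = \tfrac{1}{\eta(\ln\frac c\eta)(\ln\ln\frac c\eta)^2}$, giving $\int_{1-\rho}^1 \pi = \tfrac12\lrp{1 - \tfrac{\ln\ln c}{\ln\ln(c/(1-\rho))}}$, which confirms the order of the estimate.

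I expect the main obstacle to be precisely this prior-mass estimate: because $\pi$ concentrates near $0$ (the $1/|\eta|$ singularity) while we must control its mass near the boundary $|\eta|=1$ where the density is smallest, the bound is driven by the nested-logarithm factors, and one must be careful about which endpoint controls each factor and about tracking the multiplicative constants (the source of the harmless $\ln 2$ discrepancy noted above). Once the quadratic and logarithmic terms are assembled, combining this Lemma~\ref{lem:regretboundary} (the branch $|\uncopt|>1$) with Lemmas~\ref{lem:regretinterior} and~\ref{lem:window-explicit} (the branches $|\uncopt|\le 1$) yields all three cases of~\eqref{eq:pathwiseregretrobbins} and completes Step~1.
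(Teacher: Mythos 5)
Your proof is correct and follows essentially the same route as the paper's: complete the square so that $R_t=-\ln\int e^{-\frac{\varprocess}{2}(\eta-\uncopt)^2}\pi(\eta)\,d\eta$, restrict the integral to $[1-\rho,1]$ (the intersection of the radius-$r$ ball around $\uncopt$ with $\supp\pi$ for $r=|\uncopt|-1+\rho$), pull out the Gaussian factor at the far endpoint to get the quadratic term, and lower-bound the prior mass via $1/\eta\ge 1$ and monotonicity of the nested-log factors. The one discrepancy you flag is real but cuts in your favor: your prior-mass estimate $\int_{1-\rho}^{1}\pi\ge \rho/\bigl(Z_0\lrp{\ln\tfrac{c}{1-\rho}}\lrp{\ln\ln\tfrac{c}{1-\rho}}^2\bigr)$ is the honest one, whereas the paper's proof asserts the same integral (over a half-interval of length $\rho$, since $\pi$ vanishes on $(1,1+\rho]$) is at least $2\rho/(Z_0\cdots)$ — an unjustified factor of $2$ — so the lemma's constant should really carry the extra $+\ln 2$ you obtain (it cannot be ``absorbed into $c$,'' but it is harmless everywhere the lemma is used).
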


It now remains to show that the bound in~\eqref{eq:pathwiseregretrobbins} is not too large on $\calE_\alpha$. Moreover, $\calE_\alpha$ holds along most of the data sequences when the data satisfies appropriate stochasticity assumptions. We show these in the following section.

\subsubsection{Step 2: conditional regret bound on $\calE_\alpha$}\label{sec:step2}
Recall $\calE_\alpha = \{ \sup_t \ln W_t \le \ln(1/\alpha) \}$. Observe that to show~\eqref{eq:lilregretonealpha}, it suffices to show that the bounds in the middle and last branches of~\eqref{eq:pathwiseregretrobbins} are of the form in~\eqref{eq:lilregretonealpha} on $\calE_\alpha$ (it holds for the first branch with $C = 4$).

We first show~\eqref{eq:lilregretonealpha} for the case where $\frac{|S_t|}{\varprocess} > 1$. As an intermediate step, Proposition~\ref{prop:intermediatebound} below first shows that on $\calE_\alpha$, if $\varprocess$ is larger than a threshold that depends only on $\alpha$, say $V_\alpha$, the third branch cannot occur. Using this observation, it further bounds $R_t$ on the third branch in terms of this threshold $V_\alpha$,  $\varprocess$, and a parameter $\rho$. 

\begin{proposition}\label{prop:intermediatebound}
Let $\rho \in (0,1/4)$. The following hold on $\calE_\alpha$. 
\begin{enumerate}
\item There exists a threshold 
\[ V_\alpha := \frac{2}{1-\rho^2} \lrp{ \ln\frac{1}{\alpha}- \ln\frac{\rho\ln\ln c}{2} + \ln\ln\frac{c}{1-\rho}  + 2\ln\ln\ln\frac{c}{1-\rho}} \numberthis \label{eq:Tdelta} \]
such that for all $t$ such that $\varprocess \ge V_\alpha$, we have $|\uncopt| \le 1$. 
\item For all $t \ge 1$ such that $|\eta^*_t| \ge 1$ (or equivalently, $\frac{|S_t|}{V_t} \ge 1$), 
\begin{align}\label{eq:regretboundonedelta}
    R_t \le \frac{(1+\rho)^2 V^2_\alpha}{8\varprocess} + \frac{\rho V_\alpha}{2} (1+2\rho) - \ln \lrp{ \tfrac{\rho \ln\ln c}{\lrp{\ln{\tfrac{c}{1-\rho}}}\lrp{\ln\ln\tfrac{c}{1-\rho}}^2} }.
\end{align}
\end{enumerate}
\end{proposition}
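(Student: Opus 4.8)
The plan is to reduce both parts to a single pathwise lower bound on $\ln Z_t$ available on the third branch, and then read off the two conclusions by combining it with the defining constraint $\ln Z_t \le \ln(1/\alpha)$ of $\calE_\alpha$. Throughout I would write $u := |S_t|/\varprocess$, so that $|\uncopt| > 1$ is exactly $u > 1$, and abbreviate the logarithmic constant of Lemma~\ref{lem:regretboundary} by $G := \ln\lrp{\frac{\rho\ln\ln c}{(\ln\frac{c}{1-\rho})(\ln\ln\frac{c}{1-\rho})^2}}$. Since $R_t = S_t^2/(2\varprocess) - \ln Z_t$, Lemma~\ref{lem:regretboundary} rearranges into $\ln Z_t \ge \frac{\varprocess}{2}(1-\rho)(2u - 1 + \rho) + G$, where I complete the square via $\frac{S_t^2}{2\varprocess} - \frac{\varprocess}{2}(u-1+\rho)^2 = \frac{\varprocess}{2}(1-\rho)(2u - 1 + \rho)$. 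This single inequality drives everything.

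For Part~1 I would use $u > 1$, which forces $2u - 1 + \rho > 1 + \rho$ and hence $\ln Z_t > \frac{(1-\rho^2)}{2}\varprocess + G$. On $\calE_\alpha$ the left side is at most $\ln(1/\alpha)$, so $\frac{(1-\rho^2)}{2}\varprocess < \ln(1/\alpha) - G$. The one bookkeeping step is to verify that $V_\alpha$ in \eqref{eq:Tdelta} has been defined precisely so that $\frac{(1-\rho^2)}{2}V_\alpha = \ln(1/\alpha) - G + \ln 2$; the factor $\tfrac12$ inside the $\ln\tfrac{\rho\ln\ln c}{2}$ term is exactly what produces this $\ln 2$ of slack. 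It then follows that $\varprocess < V_\alpha$ whenever $u > 1$, and the contrapositive is precisely the claim of Part~1.

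For Part~2 I would first extract a linear (rather than quadratic) consequence. Dropping the harmless $-\ln 2$, the same chain gives $\frac{\varprocess}{2}(1-\rho)(2u-1+\rho) \le \frac{(1-\rho^2)}{2}V_\alpha$, i.e. $\varprocess(2u - 1 + \rho) \le (1+\rho)V_\alpha$, which rearranges to $u - 1 \le \frac{(1+\rho)(V_\alpha - \varprocess)}{2\varprocess}$. Crucially, Part~1 guarantees $V_\alpha - \varprocess > 0$, and $u \ge 1$ gives $u - 1 \ge 0$, so this bound may be squared and multiplied safely. I would then start again from Lemma~\ref{lem:regretboundary}, $R_t \le \frac{\varprocess}{2}(u-1+\rho)^2 - G$, expand $(u-1+\rho)^2 = (u-1)^2 + 2\rho(u-1) + \rho^2$, and bound the three pieces termwise using the displayed bound on $u-1$ together with $0 < V_\alpha - \varprocess$ and $\varprocess < V_\alpha$: the quadratic term yields $\frac{\varprocess}{2}(u-1)^2 \le \frac{(1+\rho)^2(V_\alpha-\varprocess)^2}{8\varprocess} \le \frac{(1+\rho)^2 V_\alpha^2}{8\varprocess}$, the cross term yields $\rho\varprocess(u-1) \le \frac{\rho(1+\rho)V_\alpha}{2}$, and the constant term yields $\frac{\rho^2\varprocess}{2} \le \frac{\rho^2 V_\alpha}{2}$. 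Adding the last two gives exactly $\frac{\rho V_\alpha}{2}(1+2\rho)$, and carrying along $-G$ reproduces \eqref{eq:regretboundonedelta}.

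The step I expect to be the main obstacle is the exact calibration between the definition of $V_\alpha$ and the constant $G$ coming out of Lemma~\ref{lem:regretboundary}: all the clean cancellations (the $(1-\rho^2)$ factors, the $\ln 2$ slack, and the collapse $\tfrac{\rho(1+\rho)}{2} + \tfrac{\rho^2}{2} = \tfrac{\rho}{2}(1+2\rho)$) hinge on getting this matching right, and it is easy to be off by a constant or a factor. A secondary delicate point is sign control: the bound on $u-1$ and the quantity $V_\alpha - \varprocess$ must both be nonnegative before I square or multiply, which is exactly why Part~1 is proved first and why the hypothesis $u \ge 1$ is needed. The boundary value $u = 1$, not covered by Lemma~\ref{lem:regretboundary} (which assumes $|\uncopt| > 1$), I would dispatch by continuity of the right-hand side of \eqref{eq:regretboundonedelta} in $u$, or by a direct limiting argument.
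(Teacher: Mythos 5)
Your proof is correct and follows essentially the same route as the paper's: a lower bound on $\ln Z_t$ that is linear in $\varprocess$ on the third branch, combined with the Ville constraint $\ln Z_t \le \ln(1/\alpha)$, yields $\varprocess \le V_\alpha$ and a bound on $|\uncopt|-1$ of order $(1+\rho)V_\alpha/(2\varprocess)$, which is then substituted termwise into the quadratic of Lemma~\ref{lem:regretboundary}. The only (harmless) presentational difference is that you obtain the $\ln Z_t$ lower bound by inverting Lemma~\ref{lem:regretboundary} and completing the square, rather than re-deriving it from the integral over $[s_t(1-\rho),s_t]$ as the paper does; your calibration $\tfrac{1-\rho^2}{2}V_\alpha = \ln(1/\alpha) - G + \ln 2$, the sign checks before squaring, and the collapse $\tfrac{\rho(1+\rho)}{2}+\tfrac{\rho^2}{2}=\tfrac{\rho}{2}(1+2\rho)$ all check out.
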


   With the above proposition, we get that~\eqref{eq:lilregretonealpha} holds in this case, with
   \[ C = \frac{3}{(1-\rho)^2}\lrp{1\vee \lrset{\ln ({\rho \ln\ln c}) - \ln\ln \tfrac{c}{1-\rho} - 2\ln\ln\ln\tfrac{c}{1-\rho} - \ln 2}}^2. \]
Now, it only remains to prove~\eqref{eq:lilregretonealpha} for $t\ge 1$ such that $\tfrac{1}{\sqrt{1+\varprocess}} < \tfrac{ |S_t|}{\varprocess} \le 1$. In this case, since $\ln(x) \le \tfrac{x^2}{4}$ for $x\ge 0$, we have from~\eqref{eq:pathwiseregretrobbins} that
    \begin{align*}
        R_t &= \frac{S^2_t}{2 \varprocess} - \ln W_t\\
        &\le \lrp{\frac{1}{2} + \ln\frac{2}{ \ln\ln c}} + \ln\ln\lrp{c \sqrt{1+\varprocess} } + \frac{S^2_t}{4 \varprocess}\lrp{1+\frac{1}{\varprocess}} + 2\ln\ln\ln(c\sqrt{1+\varprocess}). \numberthis\label{eq:intermediate1}
    \end{align*}
    Rearranging the above inequality, using that ${|S_t|} \le {\varprocess}$, and that on $\calE_\alpha$, $\sup_t\ln W_t \le \ln\frac{1}{\alpha}$, we get the following on $\calE_\alpha$ in this case:
    \[  \frac{S^2_t}{4 \varprocess } \le  \ln\frac{1}{\alpha} + \lrp{\frac{1}{2} + \ln\frac{2}{ \ln\ln c}} + \ln\ln(c \sqrt{1+\varprocess} ) + \frac{1}{4} + 2 \ln\ln\ln(c\sqrt{1+\varprocess}). \]
    Substituting the above back in~\eqref{eq:intermediate1}, 
    we get that on $\calE_\alpha$, the following holds for all $t\ge 1$ such that $\frac{1}{\sqrt{1+\varprocess}} < \frac{|S_t|}{\varprocess} \le 1$:
    \begin{align*} 
        R_t &\le  \ln\frac{1}{\alpha} + 2\lrp{\frac{1}{2} + \ln\frac{2}{\ln\ln c}} + 2\ln\ln(c  \sqrt{1+\varprocess} )+\frac{1}{2} + 4 \ln\ln\ln(c\sqrt{1+\varprocess}),
    \end{align*}
    proving~\eqref{eq:lilregretonealpha} in this case, with $C = 7$.  

    \subsubsection{Step 3: stochastic data}\label{sec:step3}
    Finally, if the data are drawn from a distribution $P$ such that $(S_t)_{t\ge 1}$ is a sub-Gaussian process with variance proxy $(\varprocess)_{t\ge 1}$, then $(e^{f_t(\eta)})_{t\ge 1}$ is a non-negative super-martingale for every $\eta\in\R$  (Definition~\ref{def:subgaussian}). Thus, $(W_t)_{t\ge 1}$ is also a non-negative super-martingale. The last statement then follows from Ville's inequality \citep{ville1939etude}. $\Box$

\section{Trade-off between regret and growth rate}
In this section, we point out an apparently inherent trade-off between the growth rate of mixture martingales and their rate of accumulating regret: asymptotic optimality in one precludes asymptotic optimality in the other.

When the data is stochastic (i.i.d. from a fixed distribution $Q$), the lower limit $\liminf\nolimits_{t\to\infty} \frac{1}{V_t} \ln W_t$ is typically $Q$-a.s. a constant, and in that case, that constant is termed as the asymptotic growth rate of $W_t$ against $Q$ \citep[Chapter 7]{ramdas2024hypothesis}. It quantifies the rate at which the wealth grows exponentially under $Q$. In this section, we compare the two wealth processes defined in~\eqref{eq:mixV} and~\eqref{eq:robbinsmixwealth} at the level of their growth rate and confidence sequence (CS) widths. In contrast to the rest of the paper, this section is purely stochastic. We assume that the data is drawn i.i.d. from a fixed distribution $Q$ that satisfies $\frac{|S_t|}{V_t} \to m$ as $t\to\infty$, for some $m \in R_+$. 

For the Gaussian-mixture from Section~\ref{sec:warmuplogTregret}, using the regret bound from~\eqref{eq:boundrtgaussianmix}, we have
\[ \liminf\limits_{t\rightarrow\infty} \frac{\ln W_t}{V_t} = \liminf\limits_{t\rightarrow \infty} \lrp{\frac{S^2_t}{2V^2_t} - \frac{R_t}{V_t}} \ge \liminf\limits_{t\rightarrow\infty} \frac{S^2_t}{2V^2_t} = \frac{m^2}{2}. \]
However, as we saw in Section~\ref{sec:warmuplogTregret}, this wealth process has a regret that is $O(\ln V_t)$, and does not achieve $O(\ln\ln V_t)$ regret.

Let us now consider the mixture wealth process defined in~\eqref{eq:robbinsmixwealth} using Robbins' prior. From Theorem~\ref{th:lilregretV}, we see that 
\[ \limsup\limits_{t\rightarrow \infty}~ \frac{R_t}{V_t} \le \begin{cases}
\limsup\limits_{t\rightarrow \infty}~ \frac{1}{2}\lrp{ \frac{|S_t|}{V_t} - 1 + \rho }^2, & \text{ if } 1 < \frac{|S_t|}{V_t}\\
    0, & \text{ otherwise}.
\end{cases} \]
Since Robbins' mixture puts a large prior mass at bets close to $0$, it has relatively small mass at larger bets. Therefore, when the data is such that the optimal bet $\frac{|S_t|}{V_t}$ is far from $0$ (in particular $> 1$), it lacks sufficient weight to compete with the best-in-hindsight, and hence, suffers large regret in those cases. While the bound above gives a linear \emph{upper bound} on the regret in these cases, we numerically observe that the \emph{regret is indeed linear} in such cases. This linear regret, in fact, affects the growth rate of the wealth process, when $m$ is far from $0$. To see this, consider $m > 1$. In this case, the growth rate for the Robbins' mixture wealth is smaller than $m^2/2$ due to the linear regret suffered in this case. It is $m^2/2$ when $m \in [0,1]$. 

Next, it is well-known that for a fixed error probability $\alpha \in (0,1)$, the CS derived from the Gaussian mixture wealth~\eqref{eq:mixV} are tighter early on (for small $t$) than those obtained using Robbins’ mixture. However, from~\eqref{eq:cs}, we can see that they are asymptotically looser with an asymptotic width of $O(\ln V_t)$, while the latter has $O(\ln\ln V_t)$ width. The relative tightness of the former (for small $t$) can be explained from the faster growth rate of the corresponding wealth process, as discussed next. 

Recall that one way to construct a $(1-\alpha)$-CS for mean is to run, in parallel, a collection of one-sided sequential tests of the null hypothesis that the mean is $m_0$, for each $m_0\in\R$. At time $t$, the $(1-\alpha)$-CS is the set of all values $m_0$ whose corresponding test has not yet stopped (and hence, has not rejected $m_0$). For simplicity, consider testing the mean of a Gaussian distribution with known unit variance. To test mean being $m_0$, we may use either of the mixture wealth processes in~\eqref{eq:mixV} or~\eqref{eq:robbinsmixwealth}, with $S_t = \sum_{i\le t} (X_i - m_0)$, and $V_t = t$. Both wealth processes are non-negative martingales under the null, and therefore yield valid one-sided sequential tests with the stopping time given by 
$$\tau_{m_0}:= \min\lrset{t : W_t \ge \frac{1}{\alpha}}.$$

From above, one can deduce that for fixed $m_0$, the stopping time (to reject $m_0$) will be smaller if the wealth $W_t$ grows faster. Since the growth rate for Robbins' mixture wealth from~\eqref{eq:robbinsmixwealth} is smaller for $m_0$ that are ``far'' from the true mean, the corresponding test rejects $m_0$ later than that using Gaussian mixture from~\eqref{eq:mixV}. However, once the ``far-off'' $m_0$ are excluded, the Robbins' wealth suffers a smaller regret, and hence has a faster growth rate (in non-asymptotic sense) than the Gaussian mixture. Thus, it rejects the nearby $m_0$ faster, leading to narrower CS eventually. In fact, we don't see the CS derived using Robbins' mixture wealth process perform poorly in practice. This is because the mean values very far from the true mean are eliminated from both the CSs early on (since we often pick $\alpha=0.05$), since the signal from them is high, but the effect would be more visible for small $\alpha$. 

Thus, there appears to be a trade-off between the asymptotic growth rate of the mixture wealth process and the regret incurred. To achieve the smaller $O(\ln\ln V_t)$ regret when the mean shift is small, the mixture in~\eqref{eq:robbinsmixwealth} pays in its asymptotic growth rate against larger mean shifts. To the best of our knowledge, these observations have not been pointed out in the literature.

\section{Conclusions}\label{sec:conc}
In this work, we proved deterministic regret bounds for sub-Gaussian mixture processes of the form $W_t$, which mix $\exp\{\eta S_t - \eta^2 \varprocess / 2\}$ over $\eta\in\R$. Our regret bounds hold for all sequences of data. However, without any assumption on the data sequence, they can be large (for example, linear in $t$ or $V_t$). We then show that for certain sequences of data, specifically along which the mixture $W_t$ remains uniformly bounded, the regret for the mixture using Robbins' heavy-near-0 prior is $O(\ln\ln \varprocess)$ for every $t$. Furthermore, if the data are assumed to be stochastic, then the set of such sequences is large, and hence $O(\ln \ln \varprocess)$ holds along many sequences of observations. In fact, we show that regret is $O(\ln\ln \varprocess)$ eventually on a set of measure $1$. 
    
We strongly believe that the deterministic regret bounds of the form we prove in this work for sub-Gaussian wealth processes from the previous paragraph, also hold for a broader class of (sub-$\psi$) wealth processes obtained by mixing $\exp\{\eta S_t - \psi(\eta) \varprocess\}$ over $\eta$ for a non-negative, convex function $\psi$. Moreover, such mixtures are (super)martingales under a much broader (sub-$\psi$) class of probability measures (see, \cite{howard2020time} for a definition). When the data is assumed to be stochastic from such a distribution, we expect to get similar conditional regret bounds on Ville's events. Our work, thus, gives a path towards writing down (conditional) regret bounds for unbounded data, which may be shown to be small for stochastic settings. We, in fact, conjecture that:
\begin{quote}
Behind all concentration results that are derived by constructing an appropriate non-negative (super)martingale, followed by an application of Ville's inequality, there exists a conditional regret bound, that is, a bound on the regret of the chosen martingale that is small on the Ville event. 
\end{quote}

Finally, in game-theoretic statistics, non-negative (super)martingales, or equivalently, wealth processes in a (conservative) fair game, play a central role, together with Ville’s inequality. At their core, these are probabilistic objects. In parallel, the field of adversarial online learning is built around deterministic, path-wise regret inequalities. There appear to be strong connections between these two worlds. For example, Ville, in his thesis \citep{ville1939etude}, proved a striking deterministic statement about martingales: given a probability measure $P$, for every event $A$ of measure zero, there exists a nonnegative $P$-martingale $M^P$ that becomes unbounded on $A$. Ville’s martingale theorem thus serves as one bridge between the probabilistic and deterministic viewpoints. Related connections have been explored in subsequent works,  \citep{rakhlin2017equivalence, acciaio2013trajectorial, beiglbock2014martingale, beiglbock2015pathwise, gushchin2014pathwise,shafer2005probability,clerico2025confidence}, which derive probabilistic results from certain deterministic inequalities. However, a complete and unified understanding of the relationship between the two perspectives is still missing in the literature. We view our contributions as another step toward unifying these two paradigms. 

\section*{Acknowledgements}
{SA acknowledges the generous support from the Pratiksha Trust, Bangalore, through the Young Investigator Award, and the DST INSPIRE Faculty Grant IFA24-ENG-389. AR acknowledges funding from NSF grant 2310718 and a Sloan fellowship.}

\bibliography{BibTex}

\appendix

\section{Sub-Gaussian Process with Variance Proxy}\label{app:subGaussian}
In this section, we recall some examples of sub-Gaussian processes $(S_t)_t$ with the corresponding variance proxy $(\varprocess)_t$ to demonstrate that Definition~\ref{def:subgaussian} is quite general. 

\begin{enumerate}
\item\label{gaussian} First, if $X_i$ are drawn iid from the standard Gaussian distribution (with mean $0$ and variance $1$), then $(S_t)_{t\ge 1}$ is a 1-sub-Gaussian process with variance proxy $\varprocess = t$. This is easy to see since for all $\eta\in\R$, $\exp{\eta S_t - \eta^2 t / 2}$ is a non-negative martingale, satisfying the conditions in Definition~\ref{def:subgaussian}.

\item\label{subG} For $\sigma > 0$, if $X_i$ are drawn iid from a centered (mean zero) $\sigma^2$-sub-Gaussian distribution $P$ (that satisfies $\mathbb{E}_P[\exp(\eta X - \sigma^2\eta^2/2)] \le 1$), then $(S_t)_{t\ge 1}$ is sub-Gaussian with $V_t = \sigma^2 t$.
    
\item\label{sym} If $X_t$ are drawn iid from a distribution $P$ that is symmetric around $0$, then $(S_t)_{t\ge 1}$ is sub-Gaussian with $\varprocess = \sum\nolimits_{i=1}^t X^2_i$. Notably, in this case, $P$ may not even have a finite first moment (for instance, Cauchy distribution). 
    
\item\label{heavy} If $X_t$ are drawn iid from a distribution $P$ with mean $0$ and finite variance (i.e., $\mathbb{E}_P[X^2] < \infty$), then $(S_t)_{t\ge 1}$ is sub-Gaussian with $V_t = \frac{1}{3}\sum\nolimits_{i=1}^t (X^2_i + 2 \mathbb{E}[X^2_i])$.
\end{enumerate}

The examples above show that the condition encompasses both light-tailed (cases~\ref{gaussian} and~\ref{subG}) and heavy-tailed (cases~\ref{sym} and~\ref{heavy}) settings. Moreover, the condition of $X_i$ being iid can be greatly relaxed. We refer the reader to \citet[Table S3]{howard2021time} and the discussions therein for more details. 

\section{Proofs from Section~\ref{sec:warmuplogTregret}}\label{app:proof_warmuplogTregret}
\begin{proof}[Proof of Theorem~\ref{th:ville_gaussianmixture}]
We first prove the expression for $R_t$. To this end, observe from~\eqref{eq:mixV}, that 
\[
-\ln W_t = \frac{1}{2}\ln(1+\sigma_0^2 V_t) - \frac{\sigma_0^2 S_t^2}{2(1+\sigma_0^2 V_t)},\numberthis\label{eq:z_tgaussianmix}
\]
which equals $0$ when $V_t = 0$. Add $L_t^* = S_t^2/(2V_t)$ (or $0$ when $V_t=0$) to obtain
\[
R_t = \frac{1}{2}\ln(1+\sigma_0^2 V_t) + \frac{S_t^2}{2V_t} - \frac{\sigma_0^2 S_t^2}{2(1+\sigma_0^2 V_t)}
= \frac{1}{2}\ln(1+\sigma_0^2 V_t) + \frac{S_t^2}{2V_t(1+\sigma_0^2 V_t)},
\]
proving the expression for $R_t$ from the theorem statement.

Next, rearrange~\eqref{eq:z_tgaussianmix} to get 
\[
\frac{S_t^2}{1+\sigma_0^2 V_t}
 =  \frac{2}{\sigma_0^2}\left(\ln W_t + \tfrac12\ln(1+\sigma_0^2 V_t)\right).
\]
Plugging this into the expression for $R_t$ from above, we get
\[
R_t = \tfrac12\ln(1+\sigma_0^2 V_t) + \frac{1}{\sigma_0^2 V_t}\left(\ln W_t + \tfrac12\ln(1+\sigma_0^2 V_t)\right).
\]
On $\calE_\alpha$, since $\ln W_t\le \ln(1/\alpha)$ for all $t$, we further get
\[ R_t \le \tfrac12\ln(1+\sigma_0^2 V_t) + \frac{1}{\sigma_0^2 V_t}\left(\ln \frac{1}{\alpha} + \tfrac12\ln(1+\sigma_0^2 V_t)\right). \]
When $V_t\ge v_0>0$, replacing $V_t$ by $v_0$ in the denominators of the above expression, we get the desired bound.

Finally, if $X_i$ are iid from $P$, and $(S_t)_{t\ge 1}$ is sub-Gaussian with variance $(V_t)_{t\ge 1}$, then for  $\eta\in\R$, the process $(e^{f_t(\eta)})_{t\ge 1}$ is a non-negative super-martingale (Definition~\ref{def:subgaussian}). Hence, $(W_t)_{t\ge 1}$ is also a non-negative super-martingale. Ville's inequality \citep{ville1939etude} then implies that $P[\calE_\alpha] \ge 1-\alpha$.
\end{proof}

\section{Proofs from Section~\ref{sec:lilregret}}\label{app:proofs_lilregret}
\begin{proof}[Proof of Lemma~\ref{lem:regretinterior}]
Recall that $\uncopt = \frac{S_t}{\varprocess}$ and $L^*_t = f_t(\eta^*_t) = \frac{S^2_t}{2V_t}$. On algebraic manipulation,  we get $f_t(\eta)=L_t^\star-\frac{\varprocess}{2}(\eta-\uncopt)^2$, which implies
\begin{align*}
    W_t = \int\limits_{-1}^1 e^{f_t(\eta)} \pi(\eta) d\eta 
    &= e^{L^\star_t} \int\limits_{-1}^1 e^{- \frac{\varprocess}{2}(\eta - \uncopt)^2} \pi(\eta) d\eta\ge e^{L^\star_t - \frac{1}{2}\varprocess r^2} \int\limits_{\substack{\lrset{\eta: |\eta-\uncopt| \le r} \\ \cap [-1,1]}}  \pi(\eta) d\eta.
\end{align*}
Upon taking logs and rearranging, we get the desired inequality. 
\end{proof}

\begin{proof}[Proof of Lemma~\ref{lem:window-explicit}]
For $y\in (0,1]$ and $c > 6.6e$ chosen as in $\pi(\cdot)$, let
$$\phi(y):= \frac{1}{y\lrp{\ln\frac{c}{y}}\lrp{\ln\ln\frac{c}{y}}^2}.$$ Then, $\pi(\eta)=\tfrac{\phi(|\eta|)}{Z_0}$, where recall that $Z_0$ is the normalization constant. Let $I_t:= [\eta^*_t - r, \eta^*_t + r]$, and $|I_t \cap [-1,1]|$ denote the length of the interval $I_t \cap [-1,1]$.

For the first case ($|\uncopt|\le r < 1$), if $\eta^*_t > 0$, then $[\eta^*_t - r, \eta^*_t] \subset I_t \cap [-1,1]$, and otherwise, $[\eta^*_t, \eta^*_t + r] \subset I_t \cap [-1,1]$. Let $s_t := \operatorname{Sign}(\eta^*_t)$, and let $\bar{I}_t := [s_t(|\eta^*_t| -r), s_t|\eta^*_t|]$ denote the interval $[\eta^*_t - r, \eta^*_t]$ when $s_t = 1$, and $[\eta^*_t, \eta^*_t + r]$, otherwise. Then, the following inequalities hold:
\begin{align*} 
\frac{1}{r}\int\limits_{I_t  \cap [-1,1]} \frac{\phi(|\eta|)}{Z_0} ~ d\eta 
&\ge \frac{1}{r}\int\limits_{\bar{I}_t} \frac{\phi(|\eta|)}{Z_0} ~ d\eta \tag{$\bar{I_t} \subset I_t \cap[-1,1]$} \\
&\ge \frac{1}{Z_0} \phi\lrp{\frac{1}{r}  \int\limits_{\bar{I}_t} |\eta| d\eta  } \tag{Convexity of $\phi(\cdot)$ on both sides of $0$} \\
&\overset{(a)}{\ge} \frac{1}{Z_0} ~ \phi(r), 
\end{align*}
where $(a)$ follows by observing that $\phi(\cdot)$ is radially decreasing as well as for all $\eta\in \bar{I}_t$, $|\eta| \le r$. On rearranging the above inequality, we get the desired inequality in this case.

For the second case ($|\eta^*_t| > r$), consider the following inequalities: 
\begin{align*}
    \frac{1}{ |I_t\cap[-1,1]| }~\int\limits_{I_t \cap [-1,1]} \phi(|\eta|) ~ d\eta \overset{(a)}{\ge} \phi\lrp{\frac{1}{ |I_t\cap[-1,1]| }~\int\limits_{I_t \cap [-1,1]} |\eta| d\eta} \overset{(b)}{\ge} \phi(|\eta^*_t|), 
\end{align*}
where in $(a)$, we used Jensen's inequality (the interval $I_t\cap [-1,1]$ lies entirely in either side of $0$ depending on the sign of $\eta^*_t$,  and $\phi(\cdot)$ is a convex function on $[0,1]$ as well as on $[-1,0]$); for $(b)$, observe that $\int_{I_t\cap [-1,1]} |\eta| d\eta \le |\eta^*_t| $, and $\phi$ is radially decreasing. 

Furthermore, since $|I_t\cap [-1,1]| \ge r$, using this, re-arranging the above inequality, and dividing by $Z_0$, we get
\[ \int\limits_{I_t\cap [-1,1]} \frac{\phi(|\eta|)}{Z_0} ~d\eta \ge \frac{r}{Z_0} \phi(|\eta^*_t|) = \frac{r \ln\ln c}{2 |\eta^*_t|\lrp{\ln\frac{c}{|\eta^*_t|}}\lrp{\ln\ln\frac{c}{|\eta^*_t|}}^2} \ge \frac{r \ln\ln c}{2 |\eta^*_t|\lrp{\ln\frac{c}{r}}\lrp{ \ln \ln \frac{c}{r} }^2},\]
where, for the last inequality, we used $r < |\eta^*_t| < 1$. This completes the proof. 
\end{proof}

\begin{proof}[Proof of Lemma~\ref{lem:regretboundary}]
First, observe the following: 
$$f_t(\eta)=f_t(\uncopt)-\frac{\varprocess}{2}(\eta-\uncopt)^2.
$$
For $\rho\in(0,1)$, choose $r:=|\uncopt|-1+\rho$, and set $s_t := \operatorname{Sign}(\eta^*_t) \in \{-1,1\}$. Then, integrating the exponent of the above at the intersection $\{|\eta-\uncopt|\le r\}\cap[-1,1]$, we get
\begin{align*}
    W_t &= \int\limits_{-1}^1 e^{f_t(\eta)}\pi(\eta) d\eta \ge 
    e^{f_t(\uncopt)-\frac{\varprocess r^2}{2}} \int\limits_{\{|\eta-\uncopt|\le r\}\cap[-1,1]} \pi(\eta) d\eta.
\end{align*}
Observe that $\{|\eta-s_t|\le \rho\} \subset \{|\eta-\uncopt|\le r\}\cap[-1,1]$, and on that subset $|\eta|\in[1-\rho,1]$, implying
$$\int\limits_{\{|\eta-\uncopt|\le r\}\cap[-1,1]} \pi(\eta) d\eta\ge \int\limits_{|\eta-s_t|\le \rho}\pi(\eta) d\eta  \overset{(a)}{\ge} \dfrac{2\rho}{Z_0 \lrp{\ln\frac{c}{1-\rho}}\lrp{\ln\ln\frac{c}{1-\rho}}^2},$$
where in $(a)$, we used that $\tfrac{1}{|\eta|} \ge 1$, and that $$ \frac{1}{\lrp{\ln\frac{c}{|\eta|}}\lrp{\ln\ln\frac{c}{|\eta|}}^2} \ge \frac{1}{\lrp{\ln\frac{c}{1-\rho}}\lrp{\ln\ln\frac{c}{1-\rho}}^2}.$$ 
\end{proof}

\begin{proof}[Proof of Proposition~\ref{prop:intermediatebound}]
We first show that~\eqref{eq:Tdelta} holds. Towards this, suppose at time $t > 0$, $|\uncopt|>1$.
Then $|S_t|>\varprocess$. Let  $s_t:=\operatorname{sign}(S_t)\in\{-1,+1\}$. Then, for every $\eta\in[s_t (1-\rho), s_t]$ (to be interpreted as either $[-1, -1 + \rho]$ or $[1-\rho, 1]$, depending on sign of $s_t$), we have $\eta S_t\ge (1-\rho)|S_t|$ and $\eta S_t -\frac{1}{2}\eta^2\varprocess$ is increasing in $\eta$ for $\eta > 0$ and decreasing otherwise. Hence,
\[
f_t(\eta)
 \ge (1-\rho)|S_t| -\frac{\varprocess}{2}(1-\rho)^2
 \ge (1-\rho) \varprocess -\frac{\varprocess}{2}(1-\rho)^2 =\frac{\varprocess}{2}(1-\rho^2).
\]
Therefore,
\[
\ln W_t = \ln \int\limits_{-1}^1 e^{f_t(\eta)} \pi(\eta) d\eta \ge \ln\left(\int_{1-\rho}^{1}\pi(\eta) d\eta\right) + \frac{\varprocess}{2}(1-\rho^2)
.
\]
Now, on $\calE_\alpha$,
since $\sup_t\ln W_t\le \ln(1/\alpha)$, from the above inequality, we must also have
\begin{align*}
\frac{\varprocess}{2}(1-\rho^2) &\le \ln\frac{1}{\alpha}-\ln\left(\int_{1-\rho}^{1}\pi(\eta) d\eta\right)\\
&\overset{(a)}{\le} \ln\frac{1}{\alpha}- \ln\frac{\rho}{Z_0} +  \ln\ln\frac{c}{1-\rho} + 2 \ln\ln\ln\frac{c}{1-\rho},
\end{align*}
which implies, for such a $t$, $V_t \le V_\alpha$, with $V_\alpha$ defined in~\eqref{eq:Tdelta}. In $(a)$, we used that
\[ \ln {~ \int\limits_{1-\rho}^1 \pi(\eta) d\eta ~} \ge \ln\frac{\rho}{Z_0} - \ln\ln\frac{c}{1-\rho} - 2 \ln \ln \ln \frac{c}{1-\rho}, \numberthis\label{eq:lb} \]
which follows by observing that for $\eta \in [1-\rho, 1]$, 
\[\eta < 1, \qquad  \text{ and } \qquad \ln\tfrac{c}{|\eta|}\lrp{\ln\ln\tfrac{c}{\eta}}^2 \le \ln\tfrac{c}{1-\rho}\lrp{\ln\ln\tfrac{c}{1-\rho}}^2,\] 
and hence, $\tfrac{1}{\pi(\eta)} \le Z_0(\ln\tfrac{c}{1-\rho})(\ln\ln\tfrac{c}{1-\rho})^2$.

For~\eqref{eq:regretboundonedelta}, let $\alpha_t:=|\uncopt|-1$. As above,  for $\eta\in[s_t(1-\rho),s_t]$ we have 
$$f_t(\eta)\ge (1-\rho)|S_t|-\frac{\varprocess}{2} =\frac{\varprocess}{2}\big[(1-2\rho)+2(1-\rho)\alpha_t\big].$$
Then,
\begin{align*}
\ln W_t &\ge \ln\left(\int_{1-\rho}^{1}\pi(\eta) d\eta\right) + \frac{\varprocess}{2}\lrp{(1-2\rho)+2(1-\rho)\alpha_t} \\
&\ge \ln\left(\int_{1-\rho}^{1}\pi(\eta) d\eta\right) + \varprocess \lrp{1-\rho} \alpha_t.
\end{align*}
On $\calE_\alpha$, since $\sup_t\ln W_t\le \ln(1/\alpha)$; using this and~\eqref{eq:lb} in the above inequality, and rearranging, 
$$\alpha_t \le \frac{(1+\rho)}{2\varprocess}V_\alpha.$$ 
Further, from the previous part, we also have that $\varprocess\le V_\alpha$, and recall that Lemma~\ref{lem:regretboundary} gives a bound on $R_t$ in this case. Thus, for $t$ such that $|S_t| \ge V_t$, the first term in the bound in Lemma~\ref{lem:regretboundary} is bounded as below:
\begin{align*}
    \frac{\varprocess }{2}(\alpha_t+\rho)^2
    &= \frac{\varprocess \alpha_t^2}{2}+ \frac{ \varprocess \rho^2}{2} + V_t \alpha_t \rho \\
    &\le \frac{ (1+\rho)^2 V^2_\alpha }{8V_t}+ \frac{ V_\alpha \rho^2}{2} + \frac{ \rho (1+\rho) V_\alpha}{2}\\
    &= \frac{(1+\rho)^2 V^2_\alpha}{8 V_t} + \frac{\rho V_\alpha}{2} \lrp{1 + 2\rho}.
\end{align*}
By subtracting the constant prior mass term from Lemma~\ref{lem:regretboundary}, we get the inequality in~\eqref{eq:regretboundonedelta}.
\end{proof}

\section{The Sub-Gaussian Betting Game}\label{app:bettinggame}
In this section, we present the betting game that we consider in this work. To build intuitions, we begin by presenting the game in the simplest setting of Gaussian observations and extend it to the more general sub-Gaussian setting towards the end. 

\paragraph{Gaussian data.} 
Consider a game between two risk-neutral players: a buyer (named Skeptic) and a seller  (named Forecaster) of bets. In this game, Forecaster believes that the distribution of an unseen outcome is $N(0,1)$ (we will generalize this later).  Skeptic is skeptical about  Forecaster's belief and instead believes that it is a mean-shifted Gaussian, i.e., $N(\cdot,1)$, with non-zero mean. 

For every $\eta\in\R$,  Forecaster sells a bet $\eta$ such that every dollar placed on $\eta$ (before observing the data) pays back $e^{\eta X - \eta^2/2}$ dollars, when the data $X$ is revealed. Why are these the only bets offered? Because it is well understood that likelihood ratios are the only admissible bets against a point null~\citep[Chapter 3]{ramdas2024hypothesis}, and $e^{\eta X - \eta^2/2}$ is just the likelihood ratio between $N(\eta,1)$ and $N(0,1)$, which is the log-optimal bet if Skeptic knew the alternative to be $N(\eta,1)$. Note that any such bet is fair from Forecaster's viewpoint, because the expected value of $e^{\eta X - \eta^2/2}$ (the amount Forecaster pays up per dollar of investment in $\eta$) equals one if $X\sim N(0,1)$.

Starting with a unit wealth (i.e., $W_0 = 1$),  Skeptic bets against  Forecaster, hoping to get rich. The betting game between  Forecaster and  Skeptic can be described as below. 

Let $\mathcal P(\R)$ denote the collection of all probability measures with support in $\R$. For $t = 1, 2, \dots$ : 
    \begin{itemize}
    \item Forecaster makes the aforementioned bets available. 
    \item For some $\pi_t \in \mathcal P(\R)$, Skeptic invests a fraction $\pi_t(\eta)$ of the current wealth, i.e.,  $W_{t-1} \pi_t(\eta)$ dollars, in the bet $\eta$, for every $\eta \in \R$. 
    \item Reality reveals the data $X_t$.
    \item Forecaster pays back $e^{\eta X_t - \eta^2/2}$ dollars to Skeptic for every dollar invested in the bet $\eta$. 
    \item The wealth of Skeptic at time $t$ thus becomes $W_t = W_{t-1} \int \pi_t(\eta) e^{\eta X_t - \eta^2/2} d\eta$. 
\end{itemize}

The goal of Skeptic is to choose a betting strategy $(\pi_t)_{t\ge 1}$ whose wealth is close to that of the best fixed betting strategy in hindsight. (A slightly weaker goal for Skeptic is the following: when the true mean is nonzero, they would like to earn as much wealth as an oracle that knows the true mean; of course, when the true mean is zero, neither can earn wealth systematically.)

Formally, the performance of a given betting strategy can be evaluated by considering the difference between its log-wealth and that generated by using the best-in-hindsight strategy. This difference, termed as regret till time $t$, is given by 
\begin{align*} 
R_t &:= \ln \lrp{ \max\limits_{\eta\in \R}~  e^{\eta \sum_{i=1}^t X_i - \eta^2 t / 2}}  -  \ln W_t =\max\limits_{\eta\in \R}~ \lrp{\eta \sum_{i=1}^t X_i - \eta^2 t / 2}  -  \ln W_t. 
\end{align*}
Clearly, $\eta^*_t = S_t/t$ optimizes the wealth in hindsight above, so that the first term just equals $S_t^2/(2t)$. The question we study is whether there is a betting strategy that can make $R_t$ above grow only as $\log\log t$ (since achieving a $\log t$ regret is straightforward, see  Section~\ref{sec:warmuplogTregret}).


In this work, we will use a Bayesian betting strategy that starts with a ``prior'' $\pi \in \mathcal P(\R)$, and at round $t$, bets using a ``posterior'' given by Bayes' rule as:
\[
\pi_t(\eta) = \frac{\pi(\eta) \exp(\eta\sum_{i\leq t-1}  X_i - \frac{\eta^2}{2}(t-1))}{\int_{\R} \exp(\eta\sum_{i\leq t-1}  X_i - \frac{\eta^2}{2}(t-1)) \pi(\eta) d\eta}.
\]
Prior and posterior are put in quotes for two reasons. First, there is nothing Bayesian assumed in the problem setup, and Bayes rule is simply being used as a way of describing the strategy. One could have used ``mixture'' or ``working prior'' instead. Second, the likelihood term is only a valid likelihood in this Gaussian setup, but will no longer be one in the more general sub-Gaussian process setup described later. We claim that in this case, the expression for $W_t$ simplifies to
\[ 
W_t = \int_\R e^{\eta \sum_{i=1}^t X_i - \eta^2 t /2} \pi(\eta) d\eta. 
\]
This is directly checked by noticing that the above definition of $W_t$ results in the normalizing constant $\pi_t(\eta)$ equaling $W_{t-1}$, and thus $W_t$ does indeed satisfy the relationship
$$W_t = W_{t-1} \int \pi_t(\eta) e^{\eta X_t - \eta^2/2} d\eta. $$ 

In our paper, we found it easier to directly work with the expression for the wealth process $W_t$ above than to explicitly specify and manipulate the per-round bets $\pi_t$, which are left implicit.


\begin{remark}
    As mentioned before, the $\eta$-bet that Forecaster makes available to Skeptic in the above game is actually the likelihood ratio between $N(\eta,1)$ and $N(0,1)$, and Skeptic bets on the shifts $\eta$ in the mean. If the Forecaster instead believes that the distribution of the unseen outcome is $N(m_0,1)$, for some $m_0 \ne 0$, and the Skeptic believes that it is a mean-shifted Gaussian, that is, $\{N(m, 1): m\ne m_0\}$, then in the above game, the bets that the Forecaster makes available at each time will be $\{e^{\eta (X-m_0) - \eta^2/2}: \eta\in \R\}$. Again, $e^{\eta(X - m_0) - \eta^2/2}$ is a likelihood ratio between $N(m_0 - \eta, 1)$ and $N(m_0, 1)$. Thus, Forecaster's belief that the distribution is 0-mean Gaussian is without loss of generality, because one can simply redefine $X' = X-m_0$ and proceed. This also holds if the Forecaster changes their prediction to $N(m_t,1)$ in each round $t$: such time-varying forecasts reduce to zero mean forecasts by recentering. Finally, the assumption of the data being unit variance is also without loss of generality. Had the Forecaster claimed $N(0,\sigma)$ instead, we could have redefined the game using $X'=X/\sigma$ (or more generally $X'_t = (X_t-\mu_t)/\sigma_t$). As long as Skeptic is only challenging the mean forecast (which is what we study in this paper), recentering and rescaling do not qualitatively change the game. 
\end{remark}

\paragraph{Sub-Gaussian process.} Recall the functions $S_t$ and $V_t$ introduced in Section~\ref{sec:setup}. We now consider Forecaster who believes that the process $S_t$ is sub-Gaussian with a variance proxy $V_t$, and Skeptic who is skeptical about its mean. The protocol of the game remains the same as earlier,  except that a unit dollar invested in the bet $\eta$ offered by Forecaster at time $t$ now pays back 
\[ e^{\eta (S_t - S_{t-1}) - \frac{\eta^2}{2}(V_t - V_{t-1})}.  \]
With this, the wealth process of Skeptic becomes
\[ W_t = W_{t-1} \int_{\R} \pi_t(\eta) e^{\eta (S_t - S_{t-1}) - \frac{\eta^2}{2} (V_t - V_{t-1}) } d\eta.  \] 
We can similarly define the regret of Skeptic's wealth or strategy. We focus on the performance of the class of strategies whose wealth at time $t$ takes the form
\[ W_t  = \int_{\R} e^{\eta S_t - \frac{\eta^2}{2}V_t} \pi(\eta) d\eta. \]

The sub-Gaussian process (Definition~\ref{def:subgaussian}) captures a large class of distributions. Thus, the game described in the sub-Gaussian process setting is very general. In particular, the game in the Gaussian data  setting considered earlier, can be recovered by setting $S_t = \sum_{i=1}^t X_i$ and $V_t = t$. In fact, the games described above can be generalized even further, as discussed next. 

Forecaster's belief about the nature are captured by the bets he makes available. It can even change across rounds. For example, consider Forecaster who believes that the data is $N(0,1)$ at $t=1$, $N(m_0, \sigma^2)$ at $t=2$ (conditional on $X_1$), and a distribution that is symmetric around $0$ at $t=3$ (conditional on $X_1,X_2$), and so on (in fact, the forecast at time 3  can depend on the first two outcomes: for example, if $X_1$ is positive, $X_3$ can be forecast to be symmetric, and if $X_1$ is negative, $X_3$ can be forecast to any mean-zero and unit variance distribution). Recall from Appendix~\ref{app:subGaussian} that each of these forecasts can be modeled as sub-Gaussian processes. 

More generally, Forecaster does not have to specify their beliefs as sets of distributions, but simply specify the available bets. Since we work with sub-Gaussian bets, they only need to specify how they think $S_t$ and $V_t$ will change at round $t$.

In our game, Skeptic believes that the data is a mean-shifted version of Forecaster's belief.  In the time-varying example considered in the previous paragraph, a unit dollar invested in bet $\eta$ gives back $e^{\eta X_1 - \eta^2/2}$ dollars when the data $X_1$ is revealed at $t=1$. At $t=2$, it returns $e^{\eta(X_2-m_0) - \eta^2 \sigma^2/ 2}$ dollars, at $t=3$, it pays off $e^{\eta X_3 - X^2_3 \eta^2 / 2}$, etc. The game protocol remains the same as earlier. 

Our results deliver low regret in this much more general setting (than the Gaussian setting specified earlier).

\end{document}